\newif\ifarxiv
\newif\ifalse
\newcommand{\new}[1]{{\color{black} #1}}
\theoremstyle{plain}
\newtheorem{theorem}{Theorem}[section]
\newtheorem{proposition}[theorem]{Proposition}
\newtheorem{lemma}[theorem]{Lemma}
\theoremstyle{definition}
\newtheorem{definition}[theorem]{Definition}
\theoremstyle{remark}
\newcommand{\norm}[1]{\left\lVert#1\right\rVert}
\newcommand{\E}{\mathbb{E}}
\newcommand{\floor}[1]{\left\lfloor#1\right\rfloor}
\newcommand{\ceil}[1]{\left\lceil#1\right\rceil}
\newcommand{\parentheses}[1]{\left(#1\right)}
\newcommand{\angles}[1]{\left\langle#1\right\rangle}
\newcommand{\brackets}[1]{\left[#1\right]}
\newcommand{\set}[1]{\left\{#1\right\}}
\newcommand{\Var}[1]{\mathrm{Var}{#1}} 
\newcommand{\nqv}{\ensuremath{s}}
\newcommand{\alg}{QUIVER\xspace}
\newcommand{\aalg}{Apx. \alg}
\newif\ifcomm
\newcommand{\SV}[1]{\textcolor{blue}{SV:~#1}}
\newcommand{\ran}[1]{\textcolor{red}{Ran:~#1}} 
\newcommand{\YBI}[1]{\textcolor{green}{YBI:~#1}} 
\newcommand{\MM}[1]{\textcolor{purple}{MM:~#1}} 
\newcommand{\AP}[1]{\textcolor{violet}{AP:~#1}} 
\newcommand{\SV}[1]{}
\newcommand{\ran}[1]{}
\newcommand{\YBI}[1]{}
\newcommand{\MM}[1]{}
\newcommand{\AP}[1]{}
\DeclareMathOperator*{\argmin}{argmin}
\newcommand{\DP}{\ensuremath{\mathit{MSE}}}
\newcommand{\algend}[1]{}
\title{Optimal and Approximate \\ Adaptive Stochastic Quantization}
\author{%
Ran Ben Basat \\
UCL\\
\And
Yaniv Ben-Itzhak \\
VMware Research \\
\And 
Michael Mitzenmacher \\
Harvard University \\
 \And 
Shay Vargaftik \\ 
VMware Research \\
}
\begin{document}

\maketitle

\begin{abstract}
Quantization is a fundamental optimization for many machine learning (ML) use cases, including compressing gradients, model weights and activations, and datasets. The most accurate form of quantization is adaptive, where the error is minimized with respect to a given input rather than optimizing for the worst case. However, optimal adaptive quantization methods are considered infeasible in terms of both their runtime and memory requirements.

We revisit the Adaptive Stochastic Quantization (ASQ) problem and present algorithms that find optimal solutions with asymptotically improved time and space complexities. Our experiments indicate that our algorithms may open the door to using ASQ more extensively in a variety of ML applications. We also present \mbox{an even faster approximation algorithm for quantizing large inputs on the fly.}
\end{abstract}

\vspace*{2mm}
\section{Introduction}
\vspace*{2mm}
\label{sec:introduction}
Quantization is central to optimizing a large range of machine learning (ML) applications. It is often used for compressing gradients to reduce network requirements in distributed and federated learning (e.g.,~\cite{pmlr-v70-suresh17a,konevcny2018randomized,caldas2018expanding,vargaftik2022eden,dorfman2023docofl, han2024towards}); for quantization of datasets for faster training and inference (e.g.,~\cite{zhou2023dataset}); and for reducing the memory footprint while accelerating the computation for large models' inference via post-training quantization (e.g.,~\cite{frantar2023optq,jeon2023frustratingly}) and quantization-aware training (e.g.,~\cite{micikevicius2018mixed,shen2020q}) of model weights, activations and key-value (KV) caches \cite{sheng2023flexgen}.

A fundamental quantization method is \emph{stochastic quantization},
where one quantizes an input vector $X\in\mathbb R^d$ to $\widehat X\in Q^d$ using a set $Q\subset \mathbb R$ of $|Q|=s$ quantization values so that each entry is unbiased~\cite{ben2021send}.
That is, each $x\in X$ is (randomly) quantized to a value $\widehat {x}\in Q$ such that $\mathbb E\brackets{\widehat {x}}=x$.

%
Previous unbiased quantization works considered different approaches. Some are distribution-agnostic, i.e., design the quantization without optimizing it for the specific input. For example,~\cite{pmlr-v70-suresh17a,NIPS2017_6c340f25,horvoth2022natural} set quantization values {with respect to global properties such as the vector's norm, or minimum and maximum values.}
%

Other works, e.g., \cite{pmlr-v70-suresh17a,caldas2018expanding,vargaftik2022eden,safaryan2020uncertainty,vargaftik2021drive,basat2024quick, chen2024justintime}, optimize for the worst case $X$ by applying a reversible transformation (e.g., the randomized Hadamard transform) before quantization that converts it into a vector $X'$ with a controlled distribution (e.g., with $\max (X') - \min (X')  = \tilde O(\norm X_2 / \sqrt d)$). The decoder then applies the inverse transformation on the quantized $X'$ \mbox{to obtain an estimate of $X$.}

In contrast, some solutions use the fact that, in many cases, the inputs to be quantized have a significant structure that can be leveraged to reduce the quantization error. For example, DNN gradients (which are often compressed in distributed and federated learning applications to reduce bandwidth~\cite{konecy2017federated,li2024thc}) were observed to follow LogNormal-like~\cite{chmiel2020neural} or Normal-like~\cite{banner2018post,ye2020accelerating} distributions. As another example, the distribution of deep activation layers appears to follow a sub-Weibull distribution~\cite{vladimirova2018bayesian}.

To alleviate the need to assume an input distribution,
the Adaptive Stochastic Quantization (ASQ) problem (e.g.,~\cite{zhang2017zipml,fu2020don,faghri2020adaptive}) considers selecting $Q$ adaptively, i.e., with respect to the specific input $X$, that minimizes the mean squared error (MSE, also known as the {sum of variances}) given by\vspace*{1mm}\new{
\begin{equation*}
\mathbb E\left[\norm{\widehat X - X}_2^2\right] = \sum_{x\in X} \Var[\widehat {x}]\,\,,    \vspace*{-0mm}
\end{equation*}
where $\widehat X=\{\widehat x\ |\ x\in X\}$ is the vector of quantized values.}
%

\medskip
Unfortunately, known ASQ solutions are not practical for the large-size vectors that commonly appear in ML applications.
One aspect of the problem's difficulty is that it is known to be non-convex even for $s = 4$ (two-bit quantization)~\cite{faghri2020adaptive}, which excludes many natural solution methods such as gradient descent.
ZipML~\cite{zhang2017zipml} approaches the challenge using a dynamic programming approach that allows one to optimize $Q$ in polynomial time. However, this solution has a significant overhead and solving the problem \mbox{optimally is often considered to be impractical; for example, \cite{faghri2020adaptive} states}

\medskip
\textit{``To find the optimal sequence of quantization values, a dynamic program is solved whose computational and memory cost is quadratic 
... For this {reason, ZipML is impractical for quantizing on the fly''.}} 

\medskip
As another evidence of the problem's hardness, previous work \cite{fu2020don} solves the problem only for a given (Weibull) distribution, writing that

\medskip
{\textit{``The empirical distribution is usually non-differentiable, making the searching of $Q$ infeasible''.}}

\medskip
Nevertheless, there is significant interest in advancing ASQ solutions towards wider adoption as even approximate adaptive solutions like ALQ~\cite{faghri2020adaptive} have
been shown to have lower MSE than advanced distribution-agnostic methods such Non-Uniform QSGD (NUQSGD)~\cite{ramezani2021nuqsgd}. ASQ methods can also improve more complex schemes (e.g., including the aforementioned that utilize worst-case to average-case transformations) by replacing distribution-agnostic quantization with an adaptive one.

\medskip
In this paper, we show that one can, in fact, solve the ASQ problem optimally and efficiently. To this end, we introduce 
\alg, an algorithm that features novel acceleration methods and leverages the structure of the underlying problem to reduce the runtime complexity from $O(s \cdot d^2)$ to $O(s \cdot d)$ and the space complexity from $O(d^2)$ to $O(s\cdot d)$. 

\medskip
\new{
This improvement arises from the observation that the optimal solution, for given input parameters $s,d$, can be efficiently derived from the solutions for $\{s-1,d' \mid d'\in\{2,3,\ldots,d\}\}$ by a reduction to the problem of finding the row maximas in an \emph{implicitly} defined totally monotone matrix. This problem is known to have fast algorithms assuming that, for any $1\le k\le j\le d$, the sum of variances of points $\{x_k,\ldots,x_j\}$ can be computed in constant time when quantized to $\{x_k,x_j\}$, a property that is achieved by our new preprocessing method.



\medskip
We then further accelerate \alg by deriving a closed-form solution for $s=3$. In turn, this yields a faster solution for any $s$, by a variant of \alg that places two quantization values at a time instead of one. 
Finally, by discretizing the search space for $Q$, we show a fast approximation variant of \alg. This variant introduces an appealing tradeoff between accuracy and speed, making it suitable for quantizing
large vectors on the fly.}

\medskip
We implement our algorithms in C++ and demonstrate their efficiency. For example, on a commodity PC, \alg can compute the \emph{optimal} 4-bit quantization values ($s=16$) for a vector with $d=1M$ entries in under a second and compute an accurate approximation in just six milliseconds. We evaluate our solutions compared to state-of-the-art ASQ methods on a variety of distributions considering different vector sizes and number of quantization values and demonstrate a speedup of up to four orders of magnitude. We open source the code of the paper~\cite{QUIVERCode}.

\medskip
\new{
We note that there are many works that investigate different forms of compression, including non-adaptive quantization (e.g., QSGD~\cite{NIPS2017_6c340f25}), biased quantization (e.g., top-$k$~\cite{topk}), sparsification (e.g.,~\cite{fei2021efficient}), sparse coding (e.g.,~\cite{monga2021algorithm}), low-rank decomposition (e.g., PowerSGD~\cite{NEURIPS2019_d9fbed9d}), variable-length coding (e.g., EDEN~
\cite{vargaftik2022eden}) and more. }
Many of these are orthogonal to our work and can be used in conjunction with it. For example, one can use ASQ to quantize a sparsified or transformed vector or apply variable-length encoding to further reduce the size of the quantized vector.

\vspace{-2mm}
\section{Background}
\vspace{-2mm}
\subsection{Motivation}
We now briefly explain the benefits of ASQ compared to alternative methods.

\vspace{-2mm}
{
\paragraph{The benefits of adaptivity}
Unbiased solutions such as QSGD~\cite{NIPS2017_6c340f25} and NUQSGD~\cite{ramezani2021nuqsgd} rely only on global properties (e.g., the input's norm) when selecting $Q$. Figure~\ref{fig:motiv_single_vec} shows the benefit of adaptivity by illustrating the potential MSE reduction from selecting $Q$ optimally for the specific input. A similar behavior is observed for biased methods where the non-adaptive Round-To-Nearest (RTN) has a higher error than the optimal adaptive biased scalar quantizer, $k$-means.
As shown, this can translate to orders of magnitude lower error, depending on the data's skew.
}

\vspace{-2mm}
{
\paragraph{The benefits of unbiasedness}
In many cases, it is beneficial for the quantization to be unbiased. For example, when there are $n$ senders (e.g., when doing distributed mean estimation~\cite{pmlr-v70-suresh17a,konevcny2018randomized,vargaftik2022eden,vargaftik2021drive,basat2024quick}), having unbiased and independent estimates of the vectors allows the mean estimation's MSE to decay proportionally to $\frac{1}{n}$; with biased quantization, the MSE may not decay with respect to $n$ since the errors may be correlated~\cite{vargaftik2021drive} (e.g., when all clients have the same vector). This benefit is demonstrated in Figure~\ref{fig:motiv_mean_est}, which shows that while biased adaptive solutions have lower error for a small number of vectors (1-2), \mbox{having unbiased quantization is critical to lowering the error for a large $n$.}

As another example, it was recently shown that compressing large language model parameters with biased techniques such as RTN may result in inferior performance than uniform stochastic quantization \cite{nagel2020up}.
This outcome arises because the LLM layers' parameters are used to compute inner products with their inputs. Having these inner products themselves be unbiased leads to smaller errors in layers' outputs, which in turn leads to better performance.  
}

\begin{figure}[t!]

    \subfigure[Single vector estimation.\vspace*{-9mm} \label{fig:motiv_single_vec}]{%
        \includegraphics[width=.5\linewidth]{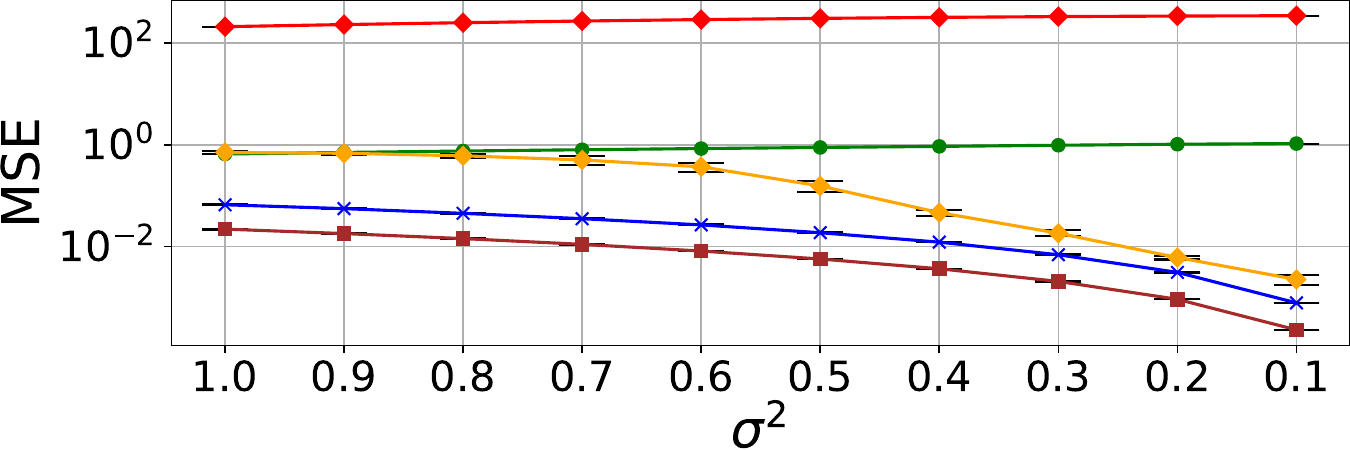}%
    }
    \subfigure[Distributed mean estimation. \vspace*{-9mm}\label{fig:motiv_mean_est}]{%
        \includegraphics[width=.5\linewidth]{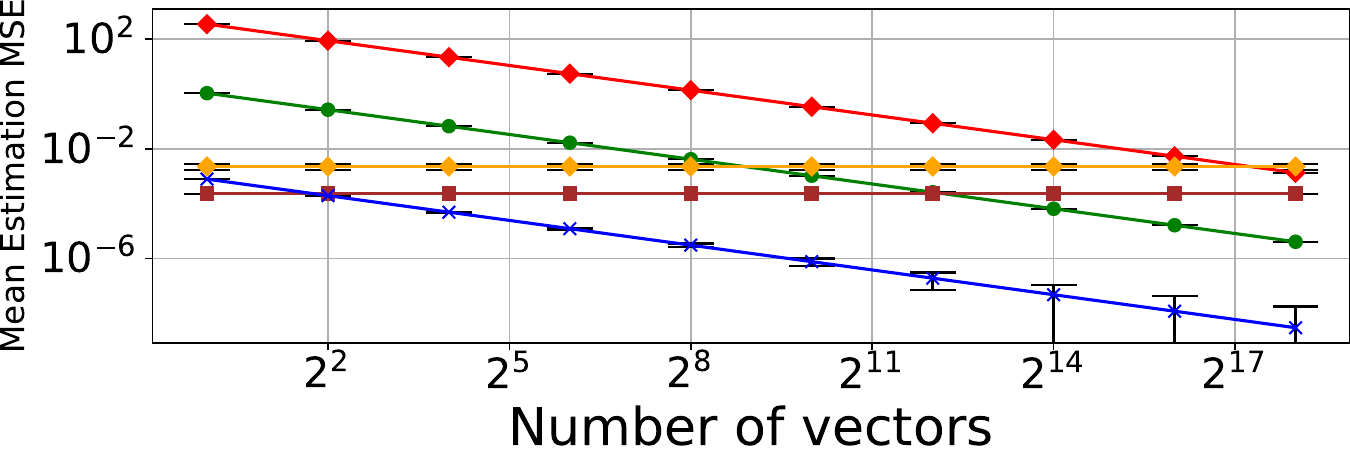}%
    }
    \includegraphics[width=\linewidth]{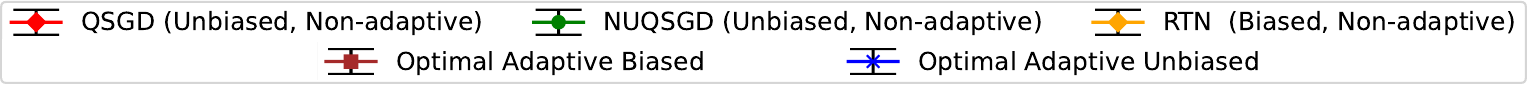}
    \caption{An experiment with dimension $d=10M$ and $s=10$ quantization values. \Cref{fig:motiv_single_vec} shows the empirical MSE of quantizing a single vector with i.i.d. LogNormal$(0,\sigma^2)$ entries. It shows that adaptive methods are more accurate than non-adaptive and that the optimal biased method is more accurate than the optimal unbiased one. However, as shown in~\cref{fig:motiv_mean_est}, for distributed mean estimation, the bias may not cancel out when averaging quantized inputs (here, we used a standard setup where all vectors are identical, e.g., see~\cite{vargaftik2021drive}, with i.i.d. LogNormal$(0,1/2)$ distributed entries) and the advantage of unbiased methods accordingly increases with the number of inputs. Each data point is averaged over ten runs with the standard deviation reported.
    }
    \label{fig:why_unbiased_why_adaptive}
    \vspace*{-3mm}
\end{figure}

\vspace{-2mm}
\subsection{Preliminaries}
Given two quantization values $a,b$ and a number $x\in[a,b]$, Stochastic Quantization (SQ) is a procedure that rounds $x$ to $\widehat x$ where $\widehat x \in \set{a,b}$. Specifically, $\widehat x$ obtains the value $a$ with probability $p_a = \frac{b-x}{b-a}$ and the value $b$ otherwise, i.e., with probability $p_b = 1-p_a = \frac{x-a}{b-a}$. An important property of SQ is that the expected rounded value is \emph{unbiased}, i.e., 
$
    \E \brackets{\widehat x} = a\cdot p_a + b\cdot p_b = x.  
$
%
The variance of stochastically quantizing $x$ is then given by 
\mbox{
$
\mathbb E\brackets{(x-\widehat x)^2} = (x-a)^2\cdot p_a + (x-b)^2\cdot p_b = (b-x)(x-a).
$}
%
Given a vector $X\in\mathbb R^d$ and an integer $\nqv\ge 2$, the Adaptive Stochastic Quantization (ASQ) problem~\cite{zhang2017zipml, fu2020don,faghri2020adaptive} looks for a set of quantization values $Q$ where $|Q| \le \nqv$ and $Q$ minimizes the mean squared error (MSE) that results from rounding $X$ to $\widehat X \in Q^d$ by stochastically quantizing each entry $x \in X$ with values $a_x = \max \set{q\in Q\mid q\le x}$ and $b_x = \min \set{q\in Q\mid q\ge x}$.

Formally, ASQ seeks to minimize the MSE, given by 
$
\mathbb E[{\lVert{X-\widehat X}}\rVert_2^2] = \sum_{x\in X} (b_x-x)(x-a_x),
$
where $
\mathbb E[{\widehat X}] = X
$ 
holds by construction. 

\subsection{Existing ASQ methods}

Leveraging the fact that there exists an optimal solution in which $Q\subseteq X$~\cite{zhang2017zipml} (i.e., the quantization values are a subset of the input), one can naively solve the problem in $d^{\Theta(\nqv)}$ time by going over all choices for the quantization values.
Instead, the following dynamic program (DP) allows us to solve it optimally and in polynomial time for any~$\nqv$~\cite{zhang2017zipml}.
Given a \emph{sorted} vector $X=\angles{x_1,\ldots,x_{d}}$, we denote by $\DP[i,j]$ the optimal MSE of quantizing the prefix vector $X_j=\angles{x_1,\ldots,x_j}$ using $i$ quantization values \emph{that include} $x_j$, that is:

\begin{equation*}
\DP[i,j] = \min_{Q: |Q|\le i, x_j\in Q} \sum_{x\in X_j} (b_x-x)(x-a_x).
\end{equation*}

Our goal is to compute a set of quantization values $Q$ that results in an optimal MSE of $\DP[\nqv, d]$.
Accordingly, we express the dynamic program as follows. 
We first define $C[k,j]$ as the sum of variances of all vector entries in the range $[x_k,x_j]$ where $x_k,x_j\in Q$ are two consecutive quantization values, i.e.,
$
C[k,j] = \sum_{x\in[x_k,x_j]} (x_j-x)(x-x_k).
$
Here and when clear from context, \mbox{to simplify notation, we write $\sum_x$ to denote $\sum_{x\in X}$.}

For $i\in\set{2,\ldots,s}, j\in\set{i,\ldots,d}$, we set $ \DP[2,j] = C[1,j] \,\, \forall j$ and use the recurrence 

\begin{align*}
\DP[i,j] = \min\limits_{k\in\set{i,\ldots,j}}\ \DP[i-1,k] + C[k,j]~.    
\end{align*}

%
%
%
%
%
%
Here, the index $k$ denotes the entry in $X$, $x_k$, of the rightmost quantization value to the left of $x_j$.
A naive solution for the above DP is first to compute the matrix $C$ (which takes $O(d^3)$ time and $O(d^2)$ space) and then calculate $\DP[i,j]$ for all $i,j$, and thus $Q$, in $O(s\cdot d^2)$ time and $O(s\cdot d)$ space. In~\cref{app:baselineAlg}, we describe a simple algorithm that implements this dynamic program.

An improved solution, ZipML~\cite{zhang2017zipml}, uses $O(s\cdot d^2)$ time and $O(d^2)$ space, but it remains infeasible even for moderate (e.g., $d=10^5$) dimensions. 
Accordingly, we next design novel techniques to asymptotically improve both the space and time complexities.


\section{Optimization Using Pre-processing}\label{sec:preprocessing}
\vspace{2mm}
The first ingredient in our solution is the usage of preprocessed arrays that allow us to efficiently compute $C[k,j]$ in constant time, at the cost of only $O(d)$ additional space.
We define the following arrays, $\beta,\gamma\in\mathbb R^d$, that store the cumulative sums of the vector and its squared entries:

\begin{align*}
\beta_j = {\sum_{x\in X_j} x}\quad , \quad
\gamma_j = {\sum_{x\in X_j} x^2}\qquad\qquad \forall j\in\set{1,\ldots,d}\ .
\end{align*}

Denoting $\beta_0=\gamma_0=0$, both are computable in $O(d)$ time as $\beta_j=\beta_{j-1}+x_j$ and  $\gamma_j=\gamma_{j-1}+x_j^2$.

We can then express $C[k,j]$ as follows:

{
\begin{align*}
    C[k,j] &= \sum_{x\in[x_k,x_j]} (x_j-x)(x-x_k)  = \sum_{x\in(x_k,x_j]} (x_j-x)(x-x_k) \\
    &= -x_j\cdot x_k\cdot\hspace*{-2mm}\sum_{x\in(x_k,x_j]} 1 +  (x_j+x_k)\cdot\hspace*{-2mm}\sum_{x\in(x_k,x_j]} x -\sum_{x\in(x_k,x_j]} x^2\\
    &=  -x_j\cdot x_k\cdot (j-k) + (x_j+x_k)\cdot (\beta_j - \beta_{k})
    - (\gamma_j - \gamma_{k}).
\end{align*}
}

With this optimization, we can evaluate $C[k,j]$ in constant time, yielding a solution that uses $O(s\cdot d)$ memory instead of $O(d^2)$. Next, we show how to improve the runtime.

\ifalse
\section{An $O(s\cdot d\cdot\log d)$ Time Algorithm Using Binary Search}
\label{sec:binary_search}
We now state a property of the problem's structure and later show how to leverage it to accelerate the computation.
\begin{proposition}
Assume $j_1, j_2\in\set{i,\ldots,d}$ and let 
\begin{align*}
k_1 &= \mbox{argmin}_k\ \ \DP[i-1,k] + C[k,j_1], \\
k_2 &= \mbox{argmin}_k\ \ \DP[i-1,k] + C[k,j_2]. 
\end{align*}
Then, $j_1 \ge j_2$ implies $x_{k_1}\ge x_{k_2}$.
\end{proposition}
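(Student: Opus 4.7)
The plan is to establish that the cost matrix $C$ satisfies the concave Monge (quadrangle) inequality
$$C[k_2, j_2] + C[k_1, j_1] \le C[k_2, j_1] + C[k_1, j_2] \qquad \text{whenever } k_2 \le k_1 \le j_2 \le j_1,$$
and then apply the classical exchange argument for argmin monotonicity. Since the vector is sorted, $x_{k_1} \ge x_{k_2}$ is equivalent to $k_1 \ge k_2$, so it suffices to show that the minimizer $k$ of $\DP[i-1, k] + C[k, j]$ is nondecreasing in $j$.

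For the argmin step, I would assume toward contradiction that $j_1 > j_2$ but $k_1 < k_2$. The optimality of $k_1$ at $j_1$ and of $k_2$ at $j_2$ yields
\begin{align*}
\DP[i-1, k_1] + C[k_1, j_1] &\le \DP[i-1, k_2] + C[k_2, j_1], \\
\DP[i-1, k_2] + C[k_2, j_2] &\le \DP[i-1, k_1] + C[k_1, j_2],
\end{align*}
and summing cancels the $\DP$ terms and produces the reverse of the Monge inequality on the indices $k_1 < k_2 \le j_2 < j_1$ (the bound $k_2 \le j_2$ holds since $k_2$ is a legal split point for $j_2$). Combining with Monge forces both inequalities to be equalities, and a fixed tiebreaking rule on the argmin (say, the largest minimizer, matching $x_{k_1} \ge x_{k_2}$) yields the contradiction.

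For the Monge step itself, I would write $C[k, j] = \sum_x \phi(x; k, j)$ with $\phi(x; k, j) = \indicator[x_k < x \le x_j]\cdot(x_j - x)(x - x_k)$ and verify the inequality pointwise in $x$. Partitioning the points by which of $x_{k_2}, x_{k_1}, x_{j_2}, x_{j_1}$ they fall between, most ranges have at most one nonzero term on each side and reduce to monotonicity of $(x_j - x)$ in $j$ or of $(x - x_k)$ in $k$. The interesting middle case $x_{k_1} < x \le x_{j_2}$, where all four terms are nonzero, collapses after cancellation to $(x_{j_2} - x_{j_1})(x_{k_1} - x_{k_2}) \le 0$, which is immediate from the index ordering. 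Summing over $x$ gives the Monge inequality.

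The main obstacle I anticipate is the tie case: if no data point lies strictly between $x_{k_1}$ and $x_{j_2}$, the Monge inequality may not be strict, so both split points genuinely achieve the optimum and the exchange argument only yields equality. Handling this cleanly requires committing to a canonical choice of argmin (e.g., the largest), under which the would-be contradiction becomes a real contradiction with the maximality of $k_2$; the pointwise Monge derivation above must then be phrased so that the relevant strict inequality is recovered precisely when $k_1 < k_2$ are both candidate minimizers, which is the only case where the argument is invoked.
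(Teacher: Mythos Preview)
Your proposal is correct and takes a route that is genuinely different from (and in some respects cleaner than) the paper's.

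The paper does not invoke the quadrangle inequality here at all; that property is only established later as \Cref{lem:Cisquanrangle} to justify the $O(s\cdot d)$ Concave-1D algorithm. For this proposition the paper instead attempts a direct, problem-specific exchange argument: assuming $x_{k_1}<x_{k_2}$, it splits the points of $X_{j_1}$ into those in $X_{j_2}$ and those in $X_{j_1}\setminus X_{j_2}$ and argues that replacing $k_1$ by $k_2$ cannot increase the cost on either part, contradicting the optimality of $k_1$. Your approach instead front-loads the quadrangle inequality for $C$ and then applies the standard two-line exchange argument that works for any cost satisfying Monge.

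What each buys: the paper's argument is self-contained at the point it appears (no forward reference to the Monge lemma), but its decomposition into parts (a) and (b) is delicate---part (a) compares costs for points in $X_{j_2}$ while the relevant right endpoint is $x_{j_1}$, not $x_{j_2}$, so the appeal to ``$k_2$ minimizes by definition'' needs more care than the paper provides, and the displayed inequalities have a sign slip. Your route avoids this entirely: once the pointwise Monge check is done (and your middle-range computation $(x_{j_2}-x_{j_1})(x_{k_1}-x_{k_2})\le 0$ is exactly \eqref{eq:quad3} in the paper's later proof), the exchange argument is purely formal and reusable for any DP with concave costs.

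You are also right to flag the tie case: the paper's proof asserts a strict contradiction without justifying strictness, whereas your reduction to equality plus a canonical (largest) argmin convention closes the gap properly.
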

\begin{proof}
Assume to the contrary that $j_1\ge j_2$ but $x_{k_1} < x_{k_2}$.
We now show that this implies that $k_1$ is not optimal for $j_1$.

\begin{algorithm}[tb]
   \caption{Binary Search}
\begin{algorithmic}[1]
   \Statex {\bfseries Require:} $C:[d]\times [d] \to \mathbb R^+$.
   \For{$j=2$ {\bfseries to} $d$}\label{line:ss444s}
        \State $\DP[2,j] = C[1,j]$
   \EndFor
   \For{$i=3$ {\bfseries to} $s$}\label{line:comput444eopti}
      \State MSEi($i, i, d, i, d$)
   \EndFor
   \hrulefill
   \State \textbf{procedure MSEi$(i,\ell,r,k_{\min},k_{\max})$}
        \State \quad\textbf{if}$\ \ell > r\ $\textbf{then}
            \State \qquad\textbf{return}
        \State \quad $m = \floor{(\ell+r)/2}$
        \State \quad $k^* = \hspace{-4mm}\argmin\limits_{k\in\set{k_{\min},\ldots,\min\set{m,k_{\max}}}}\!\!\! \DP[i-1,k] + C[k,m]$
        \State \quad $\DP[i,m] = \DP[i-1,k^*] + C[k^*,m]$
        \State \quad MSEi($i,\ell,m - 1,k_{\min},k^*$)
        \State \quad MSEi($i,m + 1,r,k^*,k_{\max}$)


\end{algorithmic}
\label{alg:Binary}
\end{algorithm}

Recall that $\DP[i,j_1]$ is the MSE of stochastically quantizing the elements in $X_{j_1}$ using $i$ quantization values. Let us break this quantity into (a) the MSE of quantizing the elements in $X_{j_2}$ and (b) the MSE of quantizing the elements in $X_{j_1}\setminus X_{j_2}$. For (a), we have that switching to $k_2$ must minimize the MSE by its definition. For (b), notice that using $k_1$, the MSE is $\sum_{x\in X_{j_1}\setminus X_{j_2}} (x_{j_1}-x)(x-x_{k_1})$; however, since $x_{k_1} < x_{k_2}\le x_{j_2}\le x$, we have that $(x_{j_1}-x)(x-x_{k_1}) < (x_{j_1}-x)(x-x_{k_2})$.
That is, we have
$\DP[i,j_1] = \DP[i-1,k_1] + C[k_1,j_1] < \DP[i-1,k_2] + C[k_2,j_1]$, which is a contradiction.
\end{proof}

Intuitively, this proposition suggests that if we know that the optimal value for $\DP[i,j]$ was obtained using $k^*$, we can restrict the search range for $\DP[i,j']$ to $k \in\set{k^*,\ldots,j'}$ for all $j'>j$ and to $k \in\set{i,\ldots,,k^*}$ for $j'<j$. 
That is, when filling $\DP[i,j]$, we do not need to consider the minimum over all $k \in \set{i,\ldots,j}$.




Using this proposition, our algorithm computes \DP~ more efficiently. Namely, we define the MSEi($i,\ell,r,k_{\min},k_{\max}$) procedure, whose pseudo-code is given in~\Cref{alg:Binary}, to compute $\DP[i,j]$ for $j=\ell,\ldots,r$, knowing that 
$k_{\min}\le k^*\le k_{\max}$,
where $$k^*=\text{argmin}_{k\in\set{i,\ldots,j}}\quad  \DP[i-1,k] + C[k,j].$$
%
The procedure first computes $k^*$ for the middle of the range, $m=\floor{(\ell+r)/2}$. It then recursively invokes MSEi($i,\ell,m - 1,k_{\min},k^*$) to calculate $\DP[i,j]$ for the prefix $j=\ell,\ldots,m-1$ and MSEi($i,m + 1,r,k^*,k_{\max}$) for $j=m+1,\ldots,r$. For example, as the initial call is MSEi($i, i, d, i, d$), it computes $\DP[i,\floor{(i+d)/2}]$, where $k^* \in \set{i,\ldots,\floor{(i+d)/2}}$, and recurses accordingly.

This gives the following time complexity bound recursion:
$$
T(n,z) \le \max_{k'\in\set{0,\ldots,z}} T(n/2,k') + T(n/2,z-k') + O(z),
$$
where we are interested in $T(d,d)$. Here, $T(n,z)$ corresponds to a bound on the time it takes to compute  MSEi$(i,\ell,r,k_{\min},k_{\max})$ such that $n=r-\ell$ and $z=k_{\max}-k_{\min}$.
The simple observation that $k'+(n-k')=n$ implies that regardless of the choice of $k'$, the amount of work at each level of the recursion for a given $i$ is bounded by $d$ while we have $O(\log d)$ depth for a total runtime of $O(d\log d)$. Overall, this implies that the runtime of \Cref{alg:Binary} is $O(s\cdot d\log d)$.
The above algorithm is simple and effective for moderately-sized inputs. To address quantizing vectors of larger dimensions, we further optimize the asymptotic runtime.
\fi

\vspace{2mm}
\section{The \alg Algorithm}
\vspace{2mm}

\label{sec:concave}
To derive a faster algorithm, we observe that $C$ satisfies the quadrangle inequality, defined below:
\begin{definition}
    A function $w{:}\set{1,\ldots,d}{\times}\set{1,\ldots,d}{\to}\mathbb R$ satisfies the quadrangle inequality if for any $\texttt a{\le}\texttt  b{\le}\texttt  c{\le}\texttt d$: \ \ 
    $w[\texttt a,\texttt  c]+w[{\texttt b},\texttt d] \le w[\texttt a,\texttt d] + w[{\texttt b},\texttt  c].$
\end{definition}
\medskip
\begin{lemma}\label{lem:Cisquanrangle}
    $C$ satisfies the quadrangle inequality.
\end{lemma}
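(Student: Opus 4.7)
The plan is to verify the quadrangle inequality by reducing it to a single-step ``cross-difference'' inequality and then using telescoping. Concretely, it suffices to prove, for every valid $k$ and $j$, the discrete mixed-difference inequality
\[
\bigl(C[k,j+1]-C[k,j]\bigr) - \bigl(C[k+1,j+1]-C[k+1,j]\bigr) \ge 0,
\]
since applying this repeatedly in both indices yields $C(\texttt a,\texttt c)+C(\texttt b,\texttt d) \le C(\texttt a,\texttt d)+C(\texttt b,\texttt c)$ for any $\texttt a \le \texttt b \le \texttt c \le \texttt d$ by a standard telescoping argument.

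First, I would compute a clean closed form for the inner difference $\Delta(k,j) := C[k,j+1]-C[k,j]$. Writing out the definition of $C$, the $i=j+1$ term vanishes because $x_{j+1}-x_{j+1}=0$, and the remaining terms pair up to give
\[
\Delta(k,j) \;=\; (x_{j+1}-x_j)\sum_{i=k}^{j}(x_i-x_k).
\]
This expression is already intuitive: shifting the right endpoint from $x_j$ to $x_{j+1}$ adds a factor $(x_{j+1}-x_j)$ multiplied by the ``leverage'' of every point to the right of $x_k$.

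Next, I would compute $\Delta(k,j)-\Delta(k+1,j)$. After shifting the index of summation, the bracketed difference collapses to $(j-k)(x_{k+1}-x_k)$, so
\[
\Delta(k,j)-\Delta(k+1,j) \;=\; (x_{j+1}-x_j)\,(j-k)\,(x_{k+1}-x_k).
\]
Since $X$ is sorted, both gap factors $x_{j+1}-x_j$ and $x_{k+1}-x_k$ are nonnegative and $j-k\ge 0$, so this quantity is $\ge 0$. This establishes the single-step mixed-difference inequality.

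Finally, I would conclude by telescoping: for $\texttt a\le \texttt b\le \texttt c\le \texttt d$, iterating the $k$-monotonicity from $\texttt a$ to $\texttt b$ gives $\Delta(\texttt a,j)\ge \Delta(\texttt b,j)$ for every $j$, and summing this over $j=\texttt c,\ldots,\texttt d-1$ yields $C[\texttt a,\texttt d]-C[\texttt a,\texttt c]\ge C[\texttt b,\texttt d]-C[\texttt b,\texttt c]$, which is the quadrangle inequality. I do not expect a serious obstacle here; the only care-point is getting the algebra in the step from the double sum to the collapsed form right, particularly handling the boundary index $i=k$ (where $x_i-x_k=0$) and $i=j+1$ (where $x_{j+1}-x_i=0$), so that the resulting identity is genuinely nonnegative rather than merely signed.
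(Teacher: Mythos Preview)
Your argument is correct but proceeds differently from the paper's own proof. The paper compares $C(\texttt a,\texttt c)+C(\texttt b,\texttt d)$ and $C(\texttt a,\texttt d)+C(\texttt b,\texttt c)$ directly for arbitrary $\texttt a\le\texttt b\le\texttt c\le\texttt d$: it splits both sums according to whether $x$ lies in $[x_{\texttt a},x_{\texttt b}]$, $[x_{\texttt b},x_{\texttt c}]$, or $[x_{\texttt c},x_{\texttt d}]$, and for each region shows the pointwise inequality between the corresponding summands (three small identities of the form $(x_{\texttt c}-x)(x-x_{\texttt a})\le(x_{\texttt d}-x)(x-x_{\texttt a})$, etc.), then adds them up. Your route is the ``local $2\times2$'' approach: you compute the discrete mixed second difference
\[
\Delta(k,j)-\Delta(k+1,j)=(x_{j+1}-x_j)(j-k)(x_{k+1}-x_k)\ge 0,
\]
which is a clean product of three nonnegative factors, and then telescope in both indices to reach the general inequality. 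Both are short; your version buys a single one-line nonnegativity check at the cost of a telescoping wrapper, while the paper's version is a one-shot comparison but needs three separate case identities to be tracked and summed. Your closed form for $\Delta(k,j)$ and the collapse to $(j-k)(x_{k+1}-x_k)$ are correct (the $i=k$ and $i=j+1$ boundary terms indeed vanish), and the telescoping range $j=\texttt c,\ldots,\texttt d-1$ stays within the domain where $\Delta(\texttt b,j)$ is defined since $j\ge\texttt c\ge\texttt b$.
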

\begin{proof}
    We first observe that for any $x\in[x_{\texttt a},x_{\texttt b}]:$
    {\small
    \begin{align}
        (x_{\texttt c}-x)(x-x_{\texttt a}) &= (x_{\texttt d}-x)(x-x_{\texttt a})  + (x_{\texttt c}-x_{\texttt d})(x-x_{\texttt a})        
         \le (x_{\texttt d}-x)(x-x_{\texttt a}).\label{eq:quad1}
    \end{align}
    }
    For any $x\in[x_{\texttt c},x_{\texttt d}]$, we similarly get:
    {\small
    \begin{align}
        (x_{\texttt d}-x)(x-x_{\texttt b}) &= (x_{\texttt d}-x)(x-x_{\texttt a})  + (x_{\texttt d}-x)(x_{\texttt a}-x_{\texttt b})         \le (x_{\texttt d}-x)(x-x_{\texttt a}).\label{eq:quad2}
    \end{align}
    }
    Similarly, for $x\in[x_{\texttt b},x_{\texttt c}]$, we have that:
    {\small
    \begin{align}
        (x_{\texttt c}-x)(x-x_{\texttt a}) + (x_{\texttt d}-x)(x-x_{\texttt b}) &= 
        (x_{\texttt c}-x)(x-x_{\texttt b}) + (x_{\texttt d}-x)(x-x_{\texttt a}) + (x_{\texttt a}-x_{\texttt b})(x_{\texttt d}-x_{\texttt c})\notag\\&
        \le (x_{\texttt c}-x)(x-x_{\texttt b}) + (x_{\texttt d}-x)(x-x_{\texttt a}).\label{eq:quad3}
    \end{align}
    }

    Therefore, we get:
    \begin{multline*}
        C[{\texttt a},{\texttt c}]+C[{\texttt b},{\texttt d}] 
        \qquad= \sum_{x\in[x_{\texttt a},x_{\texttt c}]} (x_{\texttt c}-x)(x-x_{\texttt a}) + \sum_{x\in[x_{\texttt b},x_{\texttt d}]} (x_{\texttt d}-x)(x-x_{\texttt b})
        \\
        =\sum_{x\in[x_{\texttt a},x_{\texttt b}]} (x_{\texttt c}-x)(x-x_{\texttt a}) + \sum_{x\in[x_{\texttt c},x_{\texttt d}]} (x_{\texttt d}-x)(x-x_{\texttt b}) 
        + \sum_{x\in[x_{\texttt b},x_{\texttt c}]} (x_{\texttt c}-x)(x-x_{\texttt a}) +  (x_{\texttt d}-x)(x-x_{\texttt b})
        \\
        \le\sum_{x\in[x_{\texttt a},x_{\texttt b}]} (x_{\texttt d}-x)(x-x_{\texttt a}) + \sum_{x\in[x_{\texttt c},x_{\texttt d}]} (x_{\texttt d}-x)(x-x_{\texttt a}) 
        + \sum_{x\in[x_{\texttt b},x_{\texttt c}]} (x_{\texttt c}-x)(x-x_{\texttt b}) + (x_{\texttt d}-x)(x-x_{\texttt a}).
        \\=
        \sum_{x\in[x_{\texttt a},x_{\texttt d}]} (x_{\texttt d}-x)(x-x_{\texttt a}) + \sum_{x\in[x_{\texttt b},x_{\texttt c}]} (x_{\texttt c}-x)(x-x_{\texttt b})
        \qquad=\qquad C[{\texttt a},{\texttt d}] + C[{\texttt b},{\texttt c}].
    \end{multline*}
    Here, the inequality follows from equations \eqref{eq:quad1}-\eqref{eq:quad3}.
\end{proof}



Next, let us implicitly define a matrix $A\in\mathbb R^{d{\times}d}$ such that $A[k,j] = \DP[i-1,k] + C[k,j]$. Importantly, $A$ \emph{is not} stored in memory but admits constant time lookups as $\DP[i-1,\cdot]$ is stored and $C$ is efficiently computable (\cref{sec:preprocessing}).
Also, $C$ satisfies the quadrangle inequality and thus $A$ is a totally monotone matrix~\cite{galil1990linear}, i.e., for any $\texttt a < \texttt b$ and $\texttt c < \texttt d$: $(A[\texttt a,\texttt c]>A[\texttt b,\texttt c]) \implies (A[\texttt a,\texttt d]>A[\texttt b,\texttt d])$.
By applying the SMAWK algorithm~\cite{aggarwal1986geometric}, which finds the row minimas of an implicitly defined totally monotone matrix, on $A^T$, we obtain in $O(d)$ time and space the indices $k_j=\argmin_{k\in\set{1,\ldots,d}}{A[k,j]}$ for all $j\in\set{1,\ldots,d}$. This immediately gives the next row of the dynamic program, as $\DP[i,j]=\DP[i-1,k_j] + C[k_j,j]$.

The resulting solution, which we call \alg, is given in~\Cref{alg:concdp} and requires just $O(s\cdot d)$ time and space to compute the optimal quantization values.


\begin{algorithm}[tb]
   \caption{\alg}
\begin{algorithmic}[1]
   \State {\bfseries Input:} $X\in\mathbb R^d, s\in\mathbb N$. \Comment{$X$ is sorted.}
   \State \texttt{Preprocess}$(X)$ \Comment{Enables computing $C[k,j]$ in constant time (\cref{sec:preprocessing}).}
   \For{$j=2$ {\bfseries to} $d$}\label{line:ss444s}
        \State $\DP[2,j] = C[1,j]$
   \EndFor
   \For{$i=3$ {\bfseries to} $s$}\label{line:comput444eopti}
        \State $K[i,\cdot] = $ \texttt{SMAWK}$(A)$ \Comment{{\small Where $A[k,j] \triangleq \DP[i-1,k] + C[k,j]$ $\quad\forall k,j$.}}
        \State $\DP[i,j]=\DP[i-1,K[i,j]] + C[K[i,j],j]$ for all $j\in\set{i,\ldots,d}$.
   \EndFor
    \State $Q = \set{x_1,x_d}$
    \State $j = d$
    \For{$i = s$ {\bfseries down to} $3$}
        \State $j = K[i,j]$
        \State $Q = Q \cup \set{x_j}$\label{line:addjtoQ}
    \EndFor
    \State \Return $Q$
\end{algorithmic}
\label{alg:concdp}
\end{algorithm}

\section{The Accelerated \alg Algorithm}\label{sec:accquiver}
To accelerate \alg, we rely on the observation that while the problem is non-convex for $s>3$, it admits a closed-form solution when $s=3$.


Denoting by 
$
    C^2[k,j] 
    = \min_{b\in\set{k,\ldots,j}} \parentheses{C[k,b] + C[b,j] }
$    
the optimal MSE of quantizing the range $[x_k,x_j]$ using three quantization values (at $x_k,x_b,x_j$), we show how to compute $C^2$ in constant time.

Namely, consider adding a quantization value $q\in[x_k,x_j]$ (not necessarily in $X$) between two existing quantization values $x_k$ and $x_j$.
Let us define the sum of variances of all input entries in $[x_k,x_j]$ as a function of $q$:
$
    Q(q) = \sum_{x\in[x_k,q]} (q-x)(x-x_k) + \sum_{x\in(q,x_j]} (x_j-x)(x-q).
$    
%
This function is differentiable in $[x_k,x_j]\setminus X$, and we get:
$
    \frac{dQ(q)}{dq} = \sum_{x\in[x_k,q]} (x-x_k) - \sum_{x\in(q,x_j]} (x_j-x).
$    

Notice that the derivative is monotonically non-decreasing and for any $\ell\in\set{k,k+1,\ldots,j-1}$ the derivative is fixed (independent of $q$) over any interval $(x_\ell,x_{\ell+1})$. This means that $Q(q)$ is minimized at $u = \inf_q (\frac{dQ(q)}{dq} \ge 0)$, where $u\in X$. 
Denote by $b^*_{k,j}\in\set{k,\ldots, j}$ the value such that $x_{b^*_{k,j}}=u$. Notice that while $\frac{dQ(u)}{dq}$ may not be defined, we have that $\lim_{h\to 0^+}\frac{dQ(u+h)}{dq} \ge 0$ is well-defined.

We thus require
$
\sum_{i=k+1}^{b^*_{k,j}} (x_i-x_k) - \sum_{i=b^*_{k,j}+1}^j (x_j-x_i) \ge 0.
$
With some simplifications, this is equivalent to:
$
\sum_{i=k+1}^j x_i - (b^*_{k,j}-k)x_k - (j-b^*_{k,j}) x_j \ge 0,
$    
yielding
$
b^*_{k,j} \ge \frac{jx_j  -kx_k - \sum_{i=k+1}^j x_i}{x_j-x_k}.
$    

As $b^*_{k,j}$ is an integer, we get a formula for $C^2[k,j]$ that can be computed in constant time using: 
{\small
$
b^*_{k,j} =\! \big\lceil{\frac{jx_j  -kx_k - \sum_{i=k+1}^j x_i}{x_j-x_k}}\big\rceil=\big\lceil{\frac{jx_j  -kx_k - (\beta_j-\beta_k)}{x_j-x_k}}\big\rceil .
$
}
That is, for any $1\le k\le j\le d$ we have that
$
C^2[k,j] = C[k, b^*_{k,j}] + C[b^*_{k,j}, j]
$
is the sum of the variances in quantizing the entries in $[x_k,x_j]$ using the quantization values $\big\{x_k, x_{b^*_{k,j}}, x_j\big\}$.

{
\setlength{\belowdisplayskip}{0pt}
\setlength{\belowdisplayshortskip}{0pt}
\begin{algorithm}[tb]
   \caption{Accelerated \alg}
\begin{algorithmic}[1]
   \State {\bfseries Input:} $X\in\mathbb R^d, s\in\mathbb N$. \Comment{$X$ is sorted.}
   \State \texttt{Preprocess}$(X)$ \Comment{Enables computing $C[k,j]$ and $C^2[k,j]$ in constant time.}
   \State $s' = (s\mod 2)$
   \If {$s' = 0$}
       \For{$j = 2$ {\bfseries to} $d$}\label{line:accsss}
            \State $\DP[2,j] = C[1,j]$\label{line:accsss2}
       \EndFor
   \Else
       \For{$j = 3$ {\bfseries to} $d$}\label{line:accsss3}
            \State $\DP[3,j] = C^2[1,j]$\label{line:accsss4}
       \EndFor
   \EndIf
   
   \For{$i = 2$ {\bfseries to} $\floor{s/2}$}\label{line:acccomputeopti}
           \State $K[i,\cdot] = $ \texttt{SMAWK}$(B)$ \Comment{{\small Where $B[k,j] \triangleq \DP[2\cdot (i-1) + s',k] + C^2[k,j]$ $\quad \forall k,j$.}}
           \State $\DP[2\cdot i + s',j]=\DP[2\cdot (i-1) + s',K[i,j]] + C^2[K[i,j],j]$ \qquad $\forall j\in\set{i,\ldots,d}$. \label{line:accqSetMSE}
   \EndFor
    \State $Q = \set{x_1,x_d}$
    \State $j = d$
    \For{$i = \floor{s/2}$ {\bfseries down to} $2$}\label{line:reconstructionAccq}
        \State $b^* = \argmin_{b\in\set{K[i,j],\ldots,j}} \parentheses{C[K[i,j],b] + C[b,j] }$\Comment{Takes $O(1)$ time.}
        \State $j = K[i,j]$
        \State $Q = Q \cup \set{x_j,x_{b^*}}$\label{line:accaddjtoQ}
    \EndFor
    \If {$s'=1$}\label{line:accquiverFinalOddS}
        \State $b^* = \argmin_{b\in\set{0,\ldots,j}} \parentheses{C[0,b] + C[b,j] }$\Comment{Takes $O(1)$ time.}
        \State $Q = Q \cup \set{x_{b^*}}$\label{line:accquiverFinalOddSinterp}
    \EndIf
    \State \Return $Q$
\end{algorithmic}
\label{alg:accconcdp}
\end{algorithm}
}

We can then use this method to halve the required number of invocations of SMAWK by always using it to pick the \emph{second-next} quantization value and computing the optimal quantization value in between directly. Our accelerated dynamic program is then given by:\vspace*{-0mm}

{
\begin{align*}
\DP[i,j] = \begin{cases}
    \min\limits_{k\in\set{i,\ldots,j}}\ \  \DP[i-2,k] + C^2[k,j] & \mbox{$i > 3$}\\
    C^2[1,j] & \mbox{$i=3$}\\
    C[1,j] & \mbox{$i=2$}\vspace*{-0mm}
\end{cases}\ \ ,
\end{align*}
}

%
%
%
\!\!and the resulting pseudo-code for Accelerated \alg is given by~\cref{alg:accconcdp}. Similarly to \alg, we start by initializing the first row of \DP. Importantly, we now separate the even $s$ case (lines \ref{line:accsss}-\ref{line:accsss2}), in which we initialize the row using $C$, and the odd case, where we use $C^2$ (lines \ref{line:accsss3}-\ref{line:accsss4}). That is, the odd $s$ case `skips' a quantization value that we later determine separately (lines \ref{line:accquiverFinalOddS}-\ref{line:accquiverFinalOddSinterp}). Next, denoting $s'=(s\mod 2)$, we proceed with $\floor{s/2} - 1$ invocations of the SMAWK algorithm (lines~\ref{line:acccomputeopti}-\ref{line:accqSetMSE}), applied on the implicitly defined matrix $B[k,j]\triangleq \DP[2\cdot (i-1) + s',K[i,j]] + C^2[K[i,j],j]$.
The output yields the minimizers of $\DP[2\cdot i + s',j]$ used for reconstruction. In the reconstruction step (lines~\ref{line:reconstructionAccq}-\ref{line:accquiverFinalOddSinterp}), we fill in the missing quantization values by finding the optimal value between every two outputs from the dynamic program minimizers $K$.

Overall, the Accelerated \alg algorithm requires at most half of the number of SMAWK invocations compared to \alg and at most half of the memory to store $K$ and $\DP$.



To establish correctness, we state the following lemma, whose proof appears in~\cref{app:C2isQuadrangle}.
\begin{restatable}{myLemma}{quadrangle}
    \(C^2\) satisfies the quadrangle inequality.
    \label{quadrangle}
\end{restatable}

In~\cref{app:whynotfaster}, we discuss why this approach is not suitable for further acceleration by placing more than one quantization value in $[x_a,x_c]$.

\vspace*{2mm}
\section{The Approximate \alg Algorithm}
\label{sec:approximate}
\vspace*{2mm}
We now show how the usage of \emph{quantization value discretization} gives a controllable tradeoff between accuracy and speed. Intuitively, by allowing the quantization values to be placed only on a uniform grid of controllable size $m+1\ge s$ (for some $m\in\mathbb N^+$), we can accelerate the computation at the cost of a small additional error. Importantly, while the quantization values are from a discretized set of possibilities, we compute the \emph{optimal} subset of discretized values for the \emph{original input vector}.


To that end, consider the discrete set 
$S = \set{x_1 + \ell\cdot \frac{x_d-x_1}{m}\mid \ell\in\set{0,\ldots,m}}~.$ 
Our goal is then to find $Q\in {S\choose s}$ that minimizes the sum of variances for the original input.
Denoting $s_\ell = x_1 + \ell\cdot \frac{x_d-x_1}{m}$, we modify our preprocessing scheme to consider the discretization:

\begin{align*}
\alpha_\ell = {\sum_{x\in [s_0,s_\ell]} 1}\quad , \quad
\beta_\ell  = {\sum_{x\in [s_0,s_\ell]}  x}\quad , \quad
\gamma_\ell = {\sum_{x\in [s_0,s_\ell]} x^2}\qquad\qquad \forall \ell\in\set{1,\ldots,m}\ .
\end{align*}

As we explain in~\cref{app:apxquivPreprocessing}, we can compute these values in $O(d)$ time and space.

Using these arrays, we can express the sum of variances of all input entries between two quantization values $s_k,s_j$ as follows:

\begin{align*}
    C_m[k,j] &= \sum_{x\in[s_k,s_j]} (s_j-x)(x-s_k)  = \sum_{x\in(s_k,s_j]} (s_j-x)(x-s_k) \\
    &= -s_j\cdot s_k\cdot\hspace*{-2mm}\sum_{x\in(s_k,s_j]} 1 +  (s_j+s_k)\cdot\hspace*{-2mm}\sum_{x\in(s_k,s_j]} x -\sum_{x\in(s_k,s_j]} x^2\\
    &=  -s_j\cdot s_k\cdot (\alpha_j-\alpha_k) + (s_j+s_k)\cdot (\beta_j - \beta_{k})
    - (\gamma_j - \gamma_{k}).
\end{align*}

Note that the quadrangle inequality trivially holds for this extension.
The resulting algorithm, termed Approximate \alg (or in short, \aalg), proceeds as \alg with $C_m$ instead of $C$, except for the reconstruction stage where we pick $Q$ from $S$ instead of the input $X$. \aalg, \mbox{whose pseudo-code is given in~\cref{app:aalgpseudocode}, runs in space and time complexities of $O(d + m\cdot s)$.}

We next analyze the approximation guarantee of \aalg. Denote by $\texttt{opt}_{X,s}$ the optimal MSE attainable for $X$ using $s$ quantization values, and by $\texttt{AQ}_{X,2s-2}$ the MSE of \aalg with $2s-2$ values. We prove that the MSE of \aalg with $2s-2$ quantization values is close to the optimal algorithm with $s$ values. In practice, we generally find \aalg does better than the bound below, and for moderate $m$, it is nearly optimal.  

\begin{restatable}{myLemma}{approxQUIV}
    For any $X,s,m$ we have
    \textup{$\texttt{AQ}_{X,2s-2} \le \texttt{opt}_{X,s} + \frac{d\cdot (x_d-x_1)^2}{4m^2} \le \texttt{opt}_{X,s} + \frac{d\cdot \norm{X}^2_2}{2m^2}$}.
    \label{approxQUIV}
\end{restatable}
\vspace*{-2mm}
\begin{proof}
    Let $Q^*\subseteq X$ be the optimal solution with $|Q^*| \le s$. 
    For any $q\in Q^*$, denote by $\underline q=\max\set{s_\ell\in S\mid s_\ell \le q}$ and $\overline q=\min\set{s_\ell\in S\mid s_\ell \ge q}$.
    Consider the solution $\widetilde Q = \set{\underline q,\overline q\mid q\in Q^*}$. Note that $|\widetilde Q|\le 2s-2$ as $x_1,x_d\in Q^*$ and $\overline {x_1}=\underline {x_1}$ and $\overline {x_d}=\underline {x_d}$.
    Also, $\widetilde Q\subseteq S$ and is thus \mbox{a valid solution of \aalg. Thus, $\texttt{AQ}_{X,2s-2}$ is upper bounded by the MSE when using $\widetilde Q$.}

    Next, consider $x\in X$ and let $a_x = \max \set{q\in Q^*\mid q\le x}$ and $b_x = \min \set{q\in Q^*\mid q\ge x}$ be the values between which $x$ is stochastically quantized in $Q^*$.
    We consider two cases:\vspace*{-3mm}
    
    \begin{itemize}
        \item $x\in [\underline {a_x},\overline{a_x})\cup (\underline {b_x},\overline{b_x}]$. In this case, when using $\widetilde Q$, we have that $x$ is quantized in an interval of size $(x_d-x_1)/m$ and thus its variance is bounded by $(x_d-x_1)^2/4m^2$.
        \item $x\in [\overline{a_x},\underline {b_x}]$, in this case, using $\widetilde Q$, $x$ is quantized between $\overline {a_x}$ and $\underline{b_x}$, yielding a variance of $(\underline{b_x}-x)(x-\overline {a_x})\le ({b_x}-x)(x-{a_x})$, i.e., lower than the variance under $Q^*.$
    \end{itemize}
    \vspace*{-1mm}
    
    As the two cases capture all options, summing the variances over all $x\in X$ yields the result.\qedhere
\end{proof}

In terms of the \emph{vector normalized MSE} ($\mbox{vNMSE}$),\footnote{This metric is standard in quantization works (e.g., see~\cite{vargaftik2021drive} and the references therein). It enables us to reason about the results among different dimensions and distributions.} which is a normalized MSE measure given by $\frac{\mathbb E\brackets{\norm{X-\widehat X}^2_2}}{\norm{X}^2_2}$, \aalg with $2s-2$ quantization values achieves an additive $\frac{d}{2m^2}$ term to the optimal $\mbox{vNMSE}$ when using $s$ quantization values.

However, the first inequality of~\cref{approxQUIV} is generally much tighter than the second that uses the squared norm. For example, if the entries of $X$ were i.i.d. $U[a,b]$ random variables, for some constants $a<b$ then $(x_d-x_1)^2=O(1)$ while $\norm{X}^2_2=\Theta(d)$. Similarly, for i.i.d $\mathcal N(\mu,\sigma^2)$ entries for constants $\mu,\sigma$ we have $(x_d-x_1)^2=O(\log d)$ while $\norm{X}^2_2=\Theta(d)$ (both with high probability).

\section{Evaluation}

We evaluate our algorithms' empirical vNMSE and runtime against SOTA ASQ solutions. 

\begin{figure}[t]
    \centering
    \hspace*{-2mm}
    \subfigure[$s=4$ (upper), $s=16$ (lower). \label{fig:compare_exact_vs_d}]{%
    
        \includegraphics[trim={0.22cm 2.5642091cm 0.2541cm 0},clip,width=0.3253\linewidth]%
        {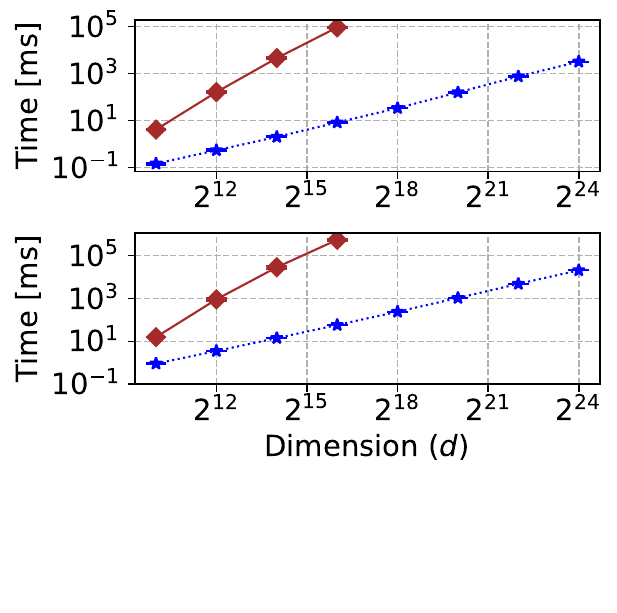}%
        
    }
    \hspace*{-1mm}
    \vspace*{1mm}
    \subfigure[$d=2^{12}$. \label{fig:compare_exact_vs_s1}]{%
    
        \includegraphics[trim={0.243822cm 2.5642091cm 0.245246851cm 0},clip,width=0.3253\linewidth]{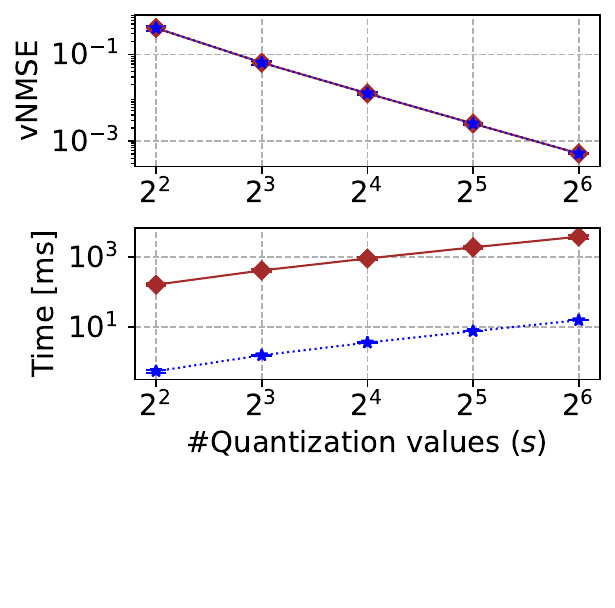}%
        
    }
    \hspace*{-1mm}
    \vspace*{1mm}
    \subfigure[ $d=2^{16}$. \label{fig:compare_exact_vs_s2}]{%
    
        \includegraphics[trim={0.27654322cm 2.5642091cm 0.245246851cm 0.24cm},clip,width=0.3253\linewidth]{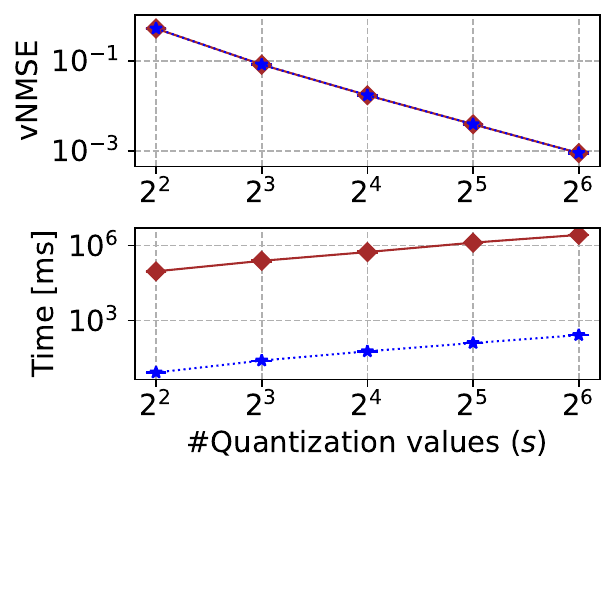}%
        
    }
    \includegraphics[width=0.4\linewidth]{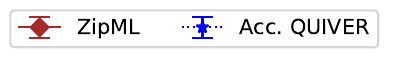}
    \vspace{1mm}
    \caption{Comparing exact solutions with LogNormal($0,1$) distributed input.}
    \label{fig:compare_exact_main}
    \vspace{1mm}
\end{figure}



\begin{figure}[t]
    \centering
    \hspace*{-2mm}\vspace*{-0mm}
    \subfigure[$s=16$ and $m=400$ bins. \label{fig:compare_approx_vs_d2}]{%
        \includegraphics[trim={0.22cm 2.5642091cm 0.2541cm 0},clip,width=0.3253\linewidth]%
        {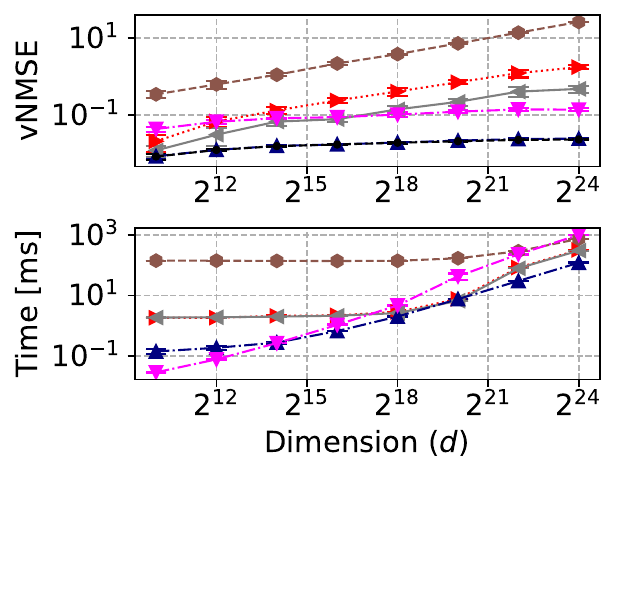}%
        
    }
    \hspace*{-1mm}
    \subfigure[$d=2^{22}$ and $m=1000$ bins. \label{fig:compare_approx_vs_s}]{%
        \includegraphics[trim={0.243822cm 2.5642091cm 0.24851645246851cm 0},clip,width=0.3253\linewidth]{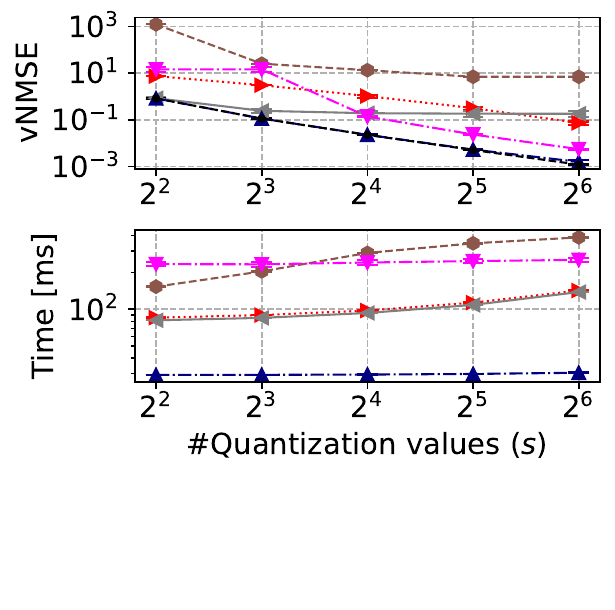}%
        \vspace{-1mm}
    }
    \hspace*{-1mm}
    \subfigure[$d=2^{22}$ and $s=32$.\label{fig:compare_approx_vs_m}]{%
        \includegraphics[trim={0.269507590580193027897654322cm 2.5642091cm 0.293012346902845246851cm 0.2024cm},clip,width=0.3253\linewidth]{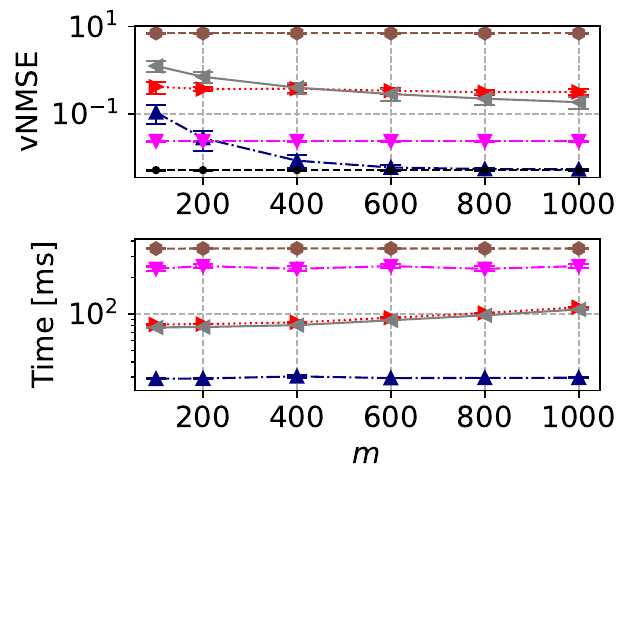}%
        \vspace{-1mm}
    }
    \includegraphics[width=0.70\linewidth]{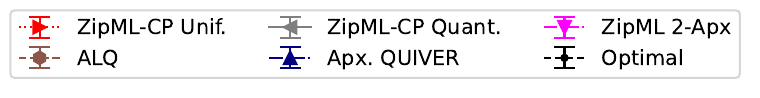}
    \caption{Comparing approximate solutions with LogNormal($0,1$) distributed input.}
    \label{fig:compare_approx_main}
\end{figure}

\paragraph{Setup.} We implement all algorithms in C++. Unless stated otherwise, we use a \texttt{g4dn.4xlarge} AWS EC2 server with custom Intel Cascade Lake CPUs with 64 GB RAM and Ubuntu 22.04 OS and average all results over 5 seeds.

\paragraph{Acceleration Speedup}
\cref{app:accquivereval} shows the speedup attainable by Accelerated \alg. As we show, \mbox{Accelerated \alg is consistently faster than \alg, providing up to $5.4\times$ speedup.}

\paragraph{Distributions.} All experiments are done with vectors whose entries are independent and identically distributed.
We present results for the LogNormal distribution and defer to Appendix~\ref{app:additional_evaluation} results for Normal, Exponential, TruncNorm, and Weibull distributions. As mentioned, these distributions are of interest as they are reported to capture gradients, model weights and activations (see \Cref{sec:introduction}).

\paragraph{Baselines.}
We evaluate Accelerated \alg and compare its runtime to ZipML~\cite{zhang2017zipml}. 
\new{
For the approximate variants, we evaluate \aalg and compare it with three approximation variants of ZipML proposed in~\cite{zhang2017zipml}, namely ZipML-CP Quantiles, ZipML-CP Uniform, and ZipML 2-Approximation.
ZipML-CP is an algorithm that runs the exact ZipML algorithm on a subset of the points called `Candidate Points’. Since ZipML runs in $O(d^2 s)$ time, here we use M candidate points to get  $O(d + M^2 s)$ time.
ZipML 2-Apx is an algorithm that computes an approximate solution in $O(d \log d + s^3)$ time. It guarantees that its sum of variances is at most twice that of an optimal solution with $\floor{s/2}$ quantization values.
We also compare with the recently proposed ALQ~\cite{faghri2020adaptive}, which is an algorithm that finds good quantization values for a truncated normal distribution. It samples several gradients (by computing the gradient of several random batches) to fit the truncated normal parameters. 
To be fair to ALQ, since we evaluate a single-shot quantization scenario, we calculate the exact mean, variance, and support parameters for the input vector. 
This then runs for several (we used 10, as in their released code) iterations, so in total, they compute $\approx 10s$ integrals. While theoretically requiring $O(d)$ time, in a model where such integral calculation takes constant time, this is markedly slower than other approaches. We note that it is possible that with low-precision integral calculations, one may improve the runtime, but the error (which is already not competitive) will degrade further.
We further discuss these approximation algorithms in \Cref{sec:app:zipml_approx}.
}

\paragraph{Exact algorithms experiments.} The results are presented in \Cref{fig:compare_exact_main}. \Cref{fig:compare_exact_vs_d} shows the runtime for optimally solving the ASQ problem for different dimensions and $s$. As shown, all our solutions are markedly faster than ZipML, which we are unable to run for dimensions $d\ge2^{17}$ due to its prohibitively large memory requirements. The asymptotic difference ($O(s\cdot d^2)$ for ZipML and $O(s\cdot d)$ for Accelerated \alg) is clearly visible in the different slopes on the log-log plot. As a result, Accelerated \alg can efficiently quantize vectors.
For example, Acc. \alg can compute the optimal 4-bit ($s=16$) quantization values for a \mbox{$1$M-sized vector in under a second.}

Next,~\Cref{fig:compare_exact_vs_s1} and ~\Cref{fig:compare_exact_vs_s2} show the vNMSE and runtime with respect to the number of quantization values $s$ for $d=2^{12}$ and $d=2^{16}$. 
As shown, the vNMSE decays linearly with $s$ while the runtime increases linearly. Even for these small dimensions, our algorithms are orders of magnitude faster than ZipML.

\paragraph{Approximate algorithms experiments.} 
The comparison results are presented in \Cref{fig:compare_approx_main}. It is evident in \cref{fig:compare_approx_vs_d2} that approximate solutions are significantly faster than exact ones. Also, \aalg offers both near-optimal vNMSE and the fastest runtime as the dimension increases. As shown in Figures \ref{fig:compare_approx_vs_s} and \ref{fig:compare_approx_vs_m}, \aalg offers these advantages for different $s,m$ values. 

Notably, on a commodity PC, \aalg can compute near-optimal 4-bit quantization values ($s=16$) for a vector with $d=2^{20}$ entries in just six milliseconds, and about 70ms for $d=2^{24}$, potentially enabling quantizing vectors on the fly for many applications.

\vspace*{1mm}
\section{Discussion}
\vspace*{1mm}

In this paper, we presented algorithms for the Adaptive Stochastic Quantization (ASQ) problem with improved space and time complexities compared to the state of the art. For parameters of interest, our exact algorithms are up to four orders of magnitude faster compared to the alternatives while using markedly less memory.
To potentially enable on-the-fly adaptive quantization of vectors, we also introduce an approximate algorithm with strong guarantees that runs faster while being significantly more accurate than other approximate solutions.

\paragraph{Limitations:}
\alg is not GPU friendly, and it remains an interesting future work to design GPU-friendly ASQ algorithms.
Also, similarly to previous works (e.g.,~\cite{zhang2017zipml}), our exact solution assumes that the input vector is sorted. Otherwise, the runtime is increased to $O(d\cdot \log d + s\cdot d)$.
We note that \aalg does not require the vector to be sorted and the time complexity remains $O(d + s\cdot m)$ \mbox{even for non-sorted inputs, making it even more appealing compared to the exact solutions. }

\paragraph{Offloading Computation to a GPU:}
For exact algorithms, one can sort the input vector on a GPU, bringing the CPU solution complexity to $O(s\cdot d)$ which is faster for large vectors. In practice, GPU sorting is rarely the bottleneck; indeed, in~\Cref{app:quantsort} we measure the time it takes to sort the vector on a T4 GPU, and also to quantize the vector after an ASQ outputs the optimal quantization values.
For example, the sorting and quantization time for a 1$M$-sized vector sums up to only 4ms where the runtime of Accelerated \alg is about one second.


\paragraph{Generalizing the algorithms for weighted inputs:}
An interesting generalization of the ASQ problem is the weighted variant, where each entry $x_i\in X$ is associated with a weight $w_i\in\mathbb R$ and the goal is to minimize the weighted sum of variances $\sum_{i=1}^d(x_i-\widehat {x_i})^2\cdot w_i$. This variant is useful when, instead of getting an input vector, one wishes to solve ASQ for an empirical distribution.
In \Cref{app:weighted} we explain how our algorithms and their analyses generalize to the weighted case, while maintaining the $O(d\cdot s)$ and $O(d+M\cdot s)$ runtime and space complexities for \alg and \aalg accordingly.
Our measurements indicate that the weighted variants are only 10-20\% slower than their unweighted counterparts.

\paragraph{Reproducability:} All our results are reproducible and our code is open sourced~\cite{QUIVERCode}.

\begin{ack}
We thank Wenchen Han for his insightful comments and suggestions. Michael Mitzenmacher was supported in part by NSF grants CCF-2101140, CNS-2107078, and DMS-2023528.
\end{ack}

%
%
%
%
%

{
\Urlmuskip=0mu plus 1mu\relax
\bibliography{references}
\bibliographystyle{IEEEtran}
}

\clearpage
\appendix
\section{Basic Algorithm}\label{app:baselineAlg}
We now describe a simple algorithm that finds the optimal quantization values using the dynamic program, with pseudo-code given by Algorithm~\ref{alg:meta}.
After initialization (lines \ref{line:sss}-\ref{line:sss2}), the algorithm iteratively computes $\DP[i,\cdot]$ given $\DP[i-1,\cdot]$ (lines \ref{line:computeopti}-\ref{line:dpifromdpiminueone}) and traces back the optimal quantization values given the solution (lines \ref{line:reverse}-\ref{line:basicaddjtoQ}). 

\begin{algorithm}[tb]
   \caption{Basic Dynamic Programming Algorithm}
\begin{algorithmic}[1]
   \State {\bfseries Input:} $X\in\mathbb R^d, s\in\mathbb N$.
   \State Compute $C:[d]\times [d] \to \mathbb R^+$ using $X$.\label{line:sss}
   \For{$j = 2$ {\bfseries to} $d$}
        \State $\DP[2,j] = C[1,j]$\label{line:sss2}
   \EndFor
   \For{$i = 3$ {\bfseries to} $s$}\label{line:computeopti}
      \For{$j = i$ {\bfseries to} $d$}
           \State $\DP[i,j] =\!\! \min\limits_{k\in\set{i,\ldots,j}}\DP[i-1,k] + C[k,j]$\label{line:dpifromdpiminueone}
    \EndFor
   \EndFor
   \State $j = d$\label{line:reverse}
   \State $Q = \set{x_1, x_d}$
    \For{$i = s$ {\bfseries down to} $3$}
        \State $j = \text{argmin}_{k\in\set{i,\ldots,j}}\quad  \DP[i-1,k] + C[k,j]$
        \State $Q = Q \cup \set{x_j}$\label{line:basicaddjtoQ}
    \EndFor
    \State \Return $Q$ 
\end{algorithmic}
\label{alg:meta}
\end{algorithm}
\new{
\section{The SMAWK Algorithm~\cite{aggarwal1986geometric}} \label{app:smawk}
Here, we provide some intuition into how SMAWK operates and achieves its efficiency. The SMAWK algorithm has four main steps:
\begin{itemize}
    \item \textbf{Pruning Phase:}
Remove columns that cannot possibly contain a row maximum. This is done by comparing each column with its neighbors and discarding those that cannot be maxima based on the totally monotone property. At the end of this phase, the number of columns can be no larger than the number of rows.

    \item \textbf{Recursive Reduction:}
The algorithm reduces the problem size by considering a subset of the rows and columns. It selects every other row and recursively solves the reduced problem.

    \item \textbf{Candidate Set:}
After solving the smaller problem, the solution provides candidate columns for the original problem. The algorithm only needs to consider these columns to find the maxima for the skipped rows.

    \item \textbf{Merge Phase:}
Combine the results from the reduced problem with the candidate set to find the maximum for each original row.

\end{itemize}

Regarding efficiency, the SMAWK algorithm achieves a time complexity of $O(d)$ for a $d\times d$ matrix. This efficiency is due to the recursive reduction of the problem size and the properties of totally monotone matrices that limit the number of comparisons needed. Namely, the pruning step takes $O(\#cols)$, where $\#cols$ is the number of columns still being considered. The crux is that the recursive step happens after the pruning, which means that the recursive invocation happens with a number of columns that is, at most, double the number of rows (as the number of rows is halved). This means that the overall complexity of each recursive step is proportional to the number of rows, yielding the recursion: $T(n)=T(n/2)+O(n)=O(n)$. 
A simple example Python implementation (by David Eppstein) appears here~\cite{SmawCode}. Our implementation is in optimized C++~\cite{QUIVERCode}.
}

\section{Proof of Lemma~\ref{quadrangle}}\label{app:C2isQuadrangle}
\quadrangle*
\begin{proof}
The lemma claims that, for any $\texttt a\le \texttt b\le \texttt c\le \texttt d$:
$$
C^2[\texttt a,\texttt c] + C^2[\texttt b,\texttt d] \le C^2[\texttt a,\texttt d] + C^2[\texttt b,\texttt c].
$$

Recall that for any $a\le c\in\set{1,\ldots,d}$, we denote
\begin{align*}
b^*_{a,c} = \argmin\limits_{b\in\set{a,\ldots,c}} C[a,b] + C[b,c].
\end{align*}

We prove the lemma by a case analysis:
\begin{itemize}
    \item Case $b^*_{\texttt b, \texttt c} \le b^*_{\texttt a, \texttt d}$.
    In this case, we have that:
    \begin{align*}
C^2(\texttt{a},\texttt{c}) + C^2(\texttt{b},\texttt{d})&
=C(\texttt{a},b^*_{\texttt{a},\texttt{c}}) + C(b^*_{\texttt{a},\texttt{c}},\texttt{c}) + C(\texttt{b},b^*_{\texttt{b},\texttt{d}}) + C(b^*_{\texttt{b},\texttt{d}},\texttt{d})
\\&\underset{(i)}{\leq}C(\texttt{a},b^*_{\texttt{b},\texttt{c}}) + C(b^*_{\texttt{b},\texttt{c}},\texttt{c}) + C(\texttt{b},b^*_{\texttt{a},\texttt{d}}) + C(b^*_{\texttt{a},\texttt{d}},\texttt{d}) \\&\underset{(ii)}{\leq} C(\texttt{b},b^*_{\texttt{b},\texttt{c}}) + C(b^*_{\texttt{b},\texttt{c}},\texttt{c}) + C(\texttt{a},b^*_{\texttt{a},\texttt{d}}) + C(b^*_{\texttt{a},\texttt{d}},\texttt{d}) \\&= C^2(\texttt{b},\texttt{c}) + C^2(\texttt{a},\texttt{d}).
\end{align*}
Here, the Inequality $(i)$ follows from the definition of $b^*_{a,c}$ that minimizes the MSE over the interval $[x_{\texttt{a}},x_{\texttt{c}}]$ and $b^*_{b,d}$ that minimizes it over $[x_{\texttt{b}},x_{\texttt{d}}]$.
Inequality $(ii)$ follows from the quadrangle inequality of $C$ (\Cref{lem:Cisquanrangle}), as $\texttt{a}\le \texttt{b}\le b^*_{\texttt{b},\texttt{c}} \le b^*_{\texttt{a},\texttt{d}}$, and thus
\begin{align*}
    C(\texttt{a}, b^*_{\texttt{b},\texttt{c}}) + C(\texttt{b}, b^*_{\texttt{a},\texttt{d}}) \leq C(\texttt{b}, b^*_{\texttt{b},\texttt{c}}) + C(\texttt{a}, b^*_{\texttt{a},\texttt{d}}).
\end{align*}
\item Case $b^*_{\texttt b, \texttt c} > b^*_{\texttt a, \texttt d}$.
In this case, we have that:
\begin{align*}
C^2(\texttt{a},\texttt{c}) + C^2(\texttt{b},\texttt{d})&
=C(\texttt{a},b^*_{\texttt{a},\texttt{c}}) + C(b^*_{\texttt{a},\texttt{c}},\texttt{c}) + C(\texttt{b},b^*_{\texttt{b},\texttt{d}}) + C(b^*_{\texttt{b},\texttt{d}},\texttt{d})
\\&\underset{(i)}{\leq}C(\texttt{a},b^*_{\texttt{a},\texttt{d}}) + C(b^*_{\texttt{a},\texttt{d}},\texttt{c}) + C(\texttt{b},b^*_{\texttt{b},\texttt{c}}) + C(b^*_{\texttt{b},\texttt{c}},\texttt{d}) \\&\underset{(ii)}{\leq} C(\texttt{b},b^*_{\texttt{b},\texttt{c}}) + C(b^*_{\texttt{b},\texttt{c}},\texttt{c}) + C(\texttt{a},b^*_{\texttt{a},\texttt{d}}) + C(b^*_{\texttt{a},\texttt{d}},\texttt{d}) \\&= C^2(\texttt{b},\texttt{c}) + C^2(\texttt{a},\texttt{d}).
\end{align*}
Here, the Inequality $(i)$ follows again from $b^*_{a,c}$ and $b^*_{b,d}$ being optimal for $[x_{\texttt{a}},x_{\texttt{c}}]$  and $[x_{\texttt{b}},x_{\texttt{d}}]$.
Inequality $(ii)$ follows from the quadrangle inequality of $C$, as $ b^*_{\texttt{a},\texttt{d}} \le b^*_{\texttt{b},\texttt{c}}\le \texttt{c}\le \texttt{d}$ and, therefore,
\begin{align*}
C(b^*_{\texttt{a},\texttt{d}}, \texttt{c}) + C(b^*_{\texttt{b},\texttt{c}}, \texttt{d}) \leq C(b^*_{\texttt{a},\texttt{d}}, \texttt{d}) + C(b^*_{\texttt{b},\texttt{c}}, \texttt{c}).
\end{align*}
\end{itemize}
Together, this concludes the proof.
\end{proof}

\section{No apparent closed-form solution for $s>3$}\label{app:whynotfaster}
We explain why our acceleration method from~\cref{sec:concave} fails for $s>3$.
Consider computing the location of two additional quantization values $b\le u$ between $x_a$ and $x_c$. 

Similarly to the above analysis, we define by $Q(b, u)$ the resulting sum of variances for all entries in $[x_a,x_c]$. Then:

\begin{align*}
    Q(b, u) = \sum_{x\in[x_a,b]} (b-x)(x-x_a) + \sum_{x\in(b,u]} (u-x)(x-b) + \sum_{x\in(u,x_c]} (x_c-x)(x-u) .
\end{align*}

Computing the partial derivatives, we then get:
\begin{align*}
    &\frac{\partial Q(b, u)}{\partial b} = \sum_{x\in[x_a,b]} (x-x_a) - \sum_{x\in(b,u]} (u-x).\\
    &\frac{\partial Q(b, u)}{\partial u} = \sum_{x\in(b,u]} (x-b) - \sum_{x\in(u,x_c]} (x_c-x).
\end{align*}

The challenge now is that both derivatives are non-continuous, and there are multiple indices $i,j$ such that $Q(x_i, x_j) < 0$ but $Q(x_{i+1}, x_j) \ge 0$ or $Q(x_i, x_{j+1}) \ge 0$. Accordingly, it seems unlikely that a closed-form solution that is computable in constant time follows from this approach.

\section{Preprosessing for \aalg}\label{app:apxquivPreprocessing}
Recall that, for  $S = \set{x_1 + \ell\cdot \frac{x_d-x_1}{m}\mid \ell\in\set{0,\ldots,m}}$ and  $s_\ell = x_1 + \ell\cdot \frac{x_d-x_1}{m}$,
our goal is to compute the following arrays in $O(d)$ time:
\begin{align*}
\alpha_\ell = {\sum_{x\in [s_0,s_\ell]} 1}\quad , \quad
\beta_\ell  = {\sum_{x\in [s_0,s_\ell]}  x}\quad , \quad
\gamma_\ell = {\sum_{x\in [s_0,s_\ell]} x^2}\qquad\qquad \forall \ell\in\set{1,\ldots,m}\ .
\end{align*}

Denoting $\delta = \frac{x_d-x_1}{m}$, the first step is to make a pass over the input and for each $x\in X$ calculate $\ell_x = \floor{\frac{x-x_1}{\delta}}$ and set
\begin{align*}
A_\ell = {\sum_{x\mid \ell_x=\ell} 1}\quad , \quad
B_\ell  = {\sum_{x\mid \ell_x=\ell}  x}\quad , \quad
\Gamma_\ell = {\sum_{x\mid \ell_x=\ell} x^2}\qquad\qquad \forall \ell\in\set{1,\ldots,m}\ .
\end{align*}
Next, we make an $O(m)$ time pass to compute the cumulative sums:
\begin{align*}
\alpha_\ell = {\sum_{i=1}^\ell A_i}\quad , \quad
\beta_\ell = {\sum_{i=1}^\ell B_i}\quad , \quad
\gamma_\ell = {\sum_{i=1}^\ell \Gamma_i}\quad \qquad\qquad \forall \ell\in\set{1,\ldots,m}\ .
\end{align*}

We note that an optimization that proved useful for improving the runtime in practice is to remove empty intervals after the first step. That is, we retain only intervals for which $A_\ell > 0$, thus reducing the number of intervals from $m$ to $m'\le m$, which can be markedly smaller in practice. 

\begin{algorithm}[tb]
   \caption{\aalg}
\begin{algorithmic}[1]
   \State {\bfseries Input:} $X\in\mathbb R^d, s,m\in\mathbb N$. 
   \State $S = \set{x_1 + \ell\cdot \frac{x_d-x_1}{m}\mid \ell\in\set{0,\ldots,m}}$
   \State \texttt{Preprocess}$(X,m)$ \Comment{Enables computing $C_m[k,j]$ in constant time (\cref{app:apxquivPreprocessing}).}\label{line:apxpreprocess}
   \For{$j=2$ {\bfseries to} $m$}\label{line:approxss444s}
        \State $\DP[2,j] = C_m[1,j]$
   \EndFor
   \For{$i=3$ {\bfseries to} $s$}\label{line:approxcomput444eopti}
        \State $K[i,\cdot] = $ \texttt{SMAWK}$(Z)$ \Comment{{\small Where $Z[k,j] \triangleq \DP[i-1,k] + C_m[k,j]$ $\quad\forall k,j$.}}
        \State $\DP[i,j]=\DP[i-1,K[i,j]] + C_m[K[i,j],j]$ for all $j\in\set{i,\ldots,m}$.
   \EndFor
    \State $Q = \set{s_0,s_m}$
    \State $j = m$
    \For{$i = s$ {\bfseries down to} $3$}\label{line:apxQ}
        \State $j = K[i,j]$
        \State $Q = Q \cup \set{s_j}$\label{line:approxjtoQ}
    \EndFor
    \State \Return $Q$
\end{algorithmic}
\label{alg:approxdp}
\end{algorithm}

\begin{figure}[b!]
    \centering
    \includegraphics[width=.9987\linewidth]{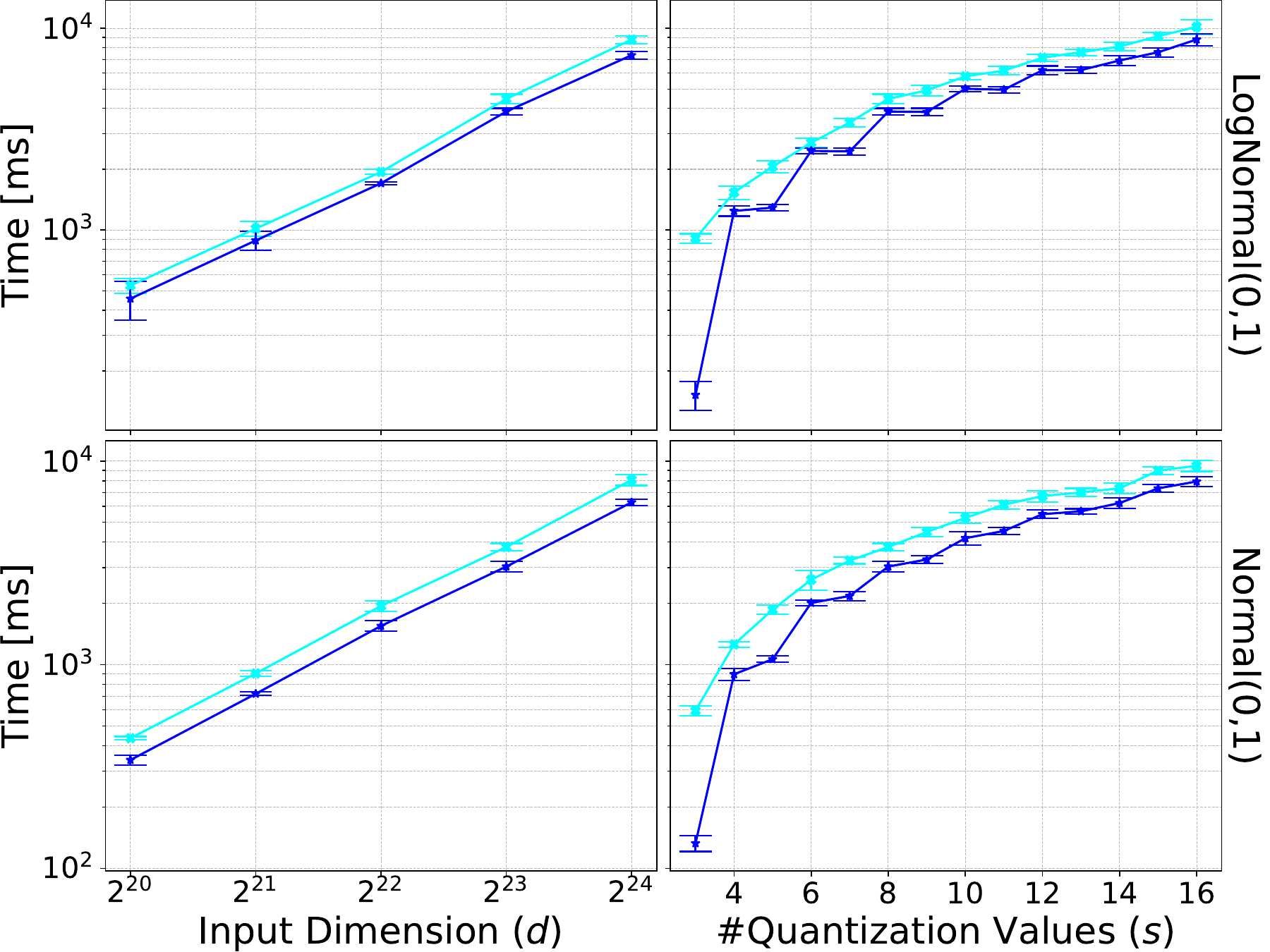}
    \includegraphics[width=0.4\linewidth]{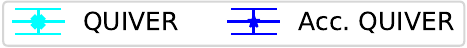}
    \caption{The speedup attainable by Accelerated \alg, as a function of $s$ (for fixed $d=2^{23}$) and $d$ (for fixed $s=8$), on the Normal and LogNormal distributions. }        
    \label{fig:accquiver}
\end{figure}

\begin{figure}[t]
    \centering
    \hspace*{-2mm}
    \subfigure[$s=4$ (upper), $s=16$ (lower). \label{fig:normal_compare_exact_vs_d}]{%
    
        \includegraphics[trim={0.22cm 2.5642091cm 0.2541cm 0},clip,width=0.3253\linewidth]%
        {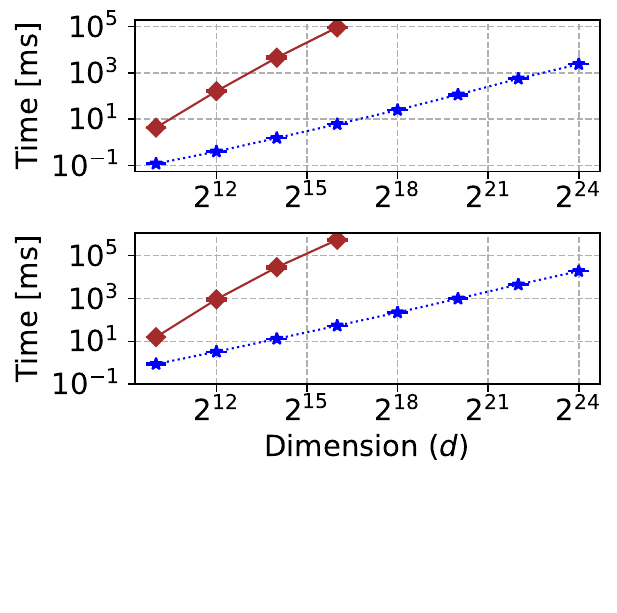}%
        
    }
    \hspace*{-1mm}
    \vspace*{-1mm}
    \subfigure[$d=2^{12}$. \label{fig:normal_compare_exact_vs_s1}]{%
    
        \includegraphics[trim={0.243822cm 2.5642091cm 0.245246851cm 0},clip,width=0.3253\linewidth]{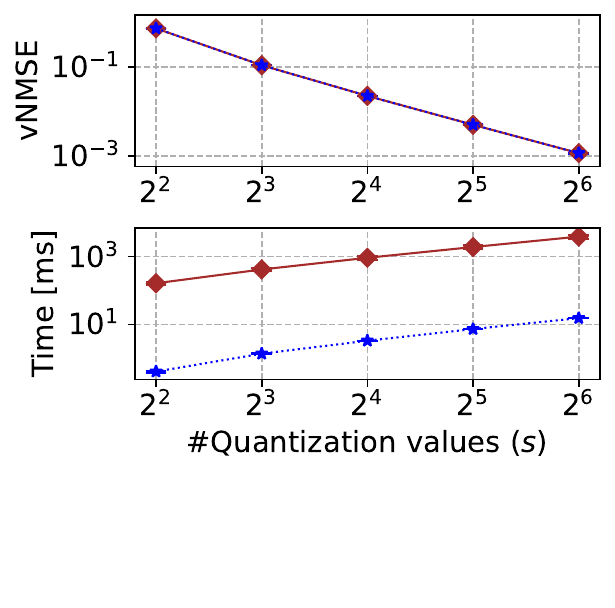}%
        
    }
    \hspace*{-1mm}
    \vspace*{-1mm}
    \subfigure[ $d=2^{16}$. \label{fig:normal_compare_exact_vs_s2}]{%
    
        \includegraphics[trim={0.27654322cm 2.5642091cm 0.245246851cm 0.24cm},clip,width=0.3253\linewidth]{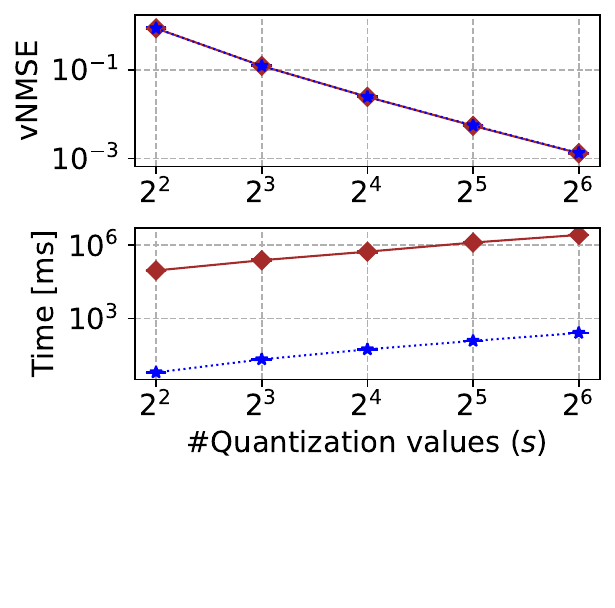}%
        
    }
    \includegraphics[width=0.4\linewidth]{figures_new/exact_legend.pdf}
    \vspace{-4mm}
    \caption{Comparing exact solutions with Normal($0,1$) distributed input.}
    \label{fig:normal_compare_exact_main}
\end{figure}

\begin{figure}[t]
    \centering
    \hspace*{-2mm}
    \subfigure[$s=4$ (upper), $s=16$ (lower). \label{fig:exponential_compare_exact_vs_d}]{%
    
        \includegraphics[trim={0.22cm 2.5642091cm 0.2541cm 0},clip,width=0.3253\linewidth]%
        {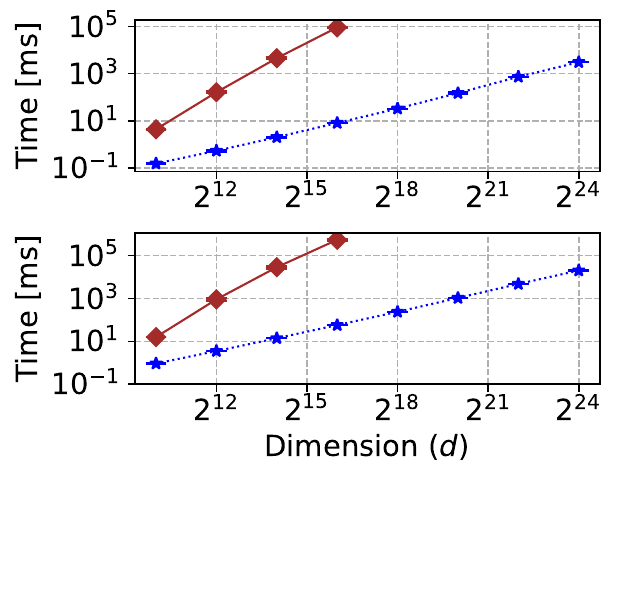}%
        
    }
    \hspace*{-1mm}
    \vspace*{-1mm}
    \subfigure[$d=2^{12}$. \label{fig:exponential_compare_exact_vs_s1}]{%
    
        \includegraphics[trim={0.243822cm 2.5642091cm 0.245246851cm 0},clip,width=0.3253\linewidth]{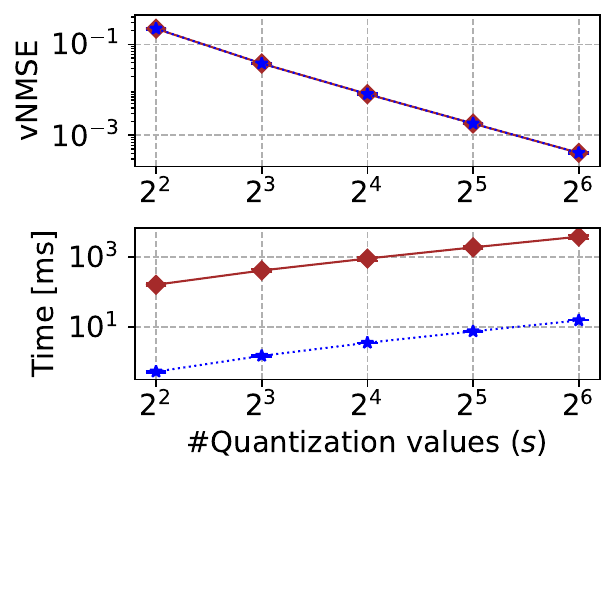}%
        
    }
    \hspace*{-1mm}
    \vspace*{-1mm}
    \subfigure[ $d=2^{16}$. \label{fig:exponential_compare_exact_vs_s2}]{%
    
        \includegraphics[trim={0.27654322cm 2.5642091cm 0.245246851cm 0.24cm},clip,width=0.3253\linewidth]{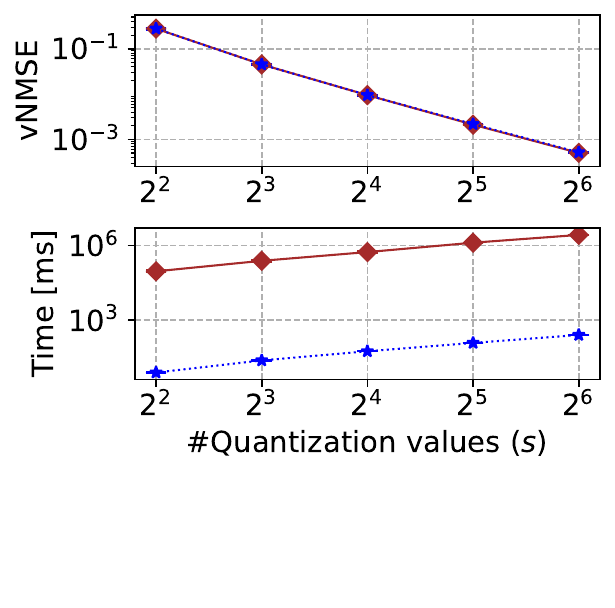}%
        
    }
    \includegraphics[width=0.4\linewidth]{figures_new/exact_legend.pdf}
    \vspace{-4mm}
    \caption{Comparing exact solutions with Exponential($1$) distributed input.}
    \label{fig:exponential_compare_exact_main}
\end{figure}

\begin{figure}[]
    \centering
    \hspace*{-2mm}
    \subfigure[$s=4$ (upper), $s=16$ (lower). \label{fig:truncnorm_compare_exact_vs_d}]{%
    
        \includegraphics[trim={0.22cm 2.5642091cm 0.2541cm 0},clip,width=0.3253\linewidth]%
        {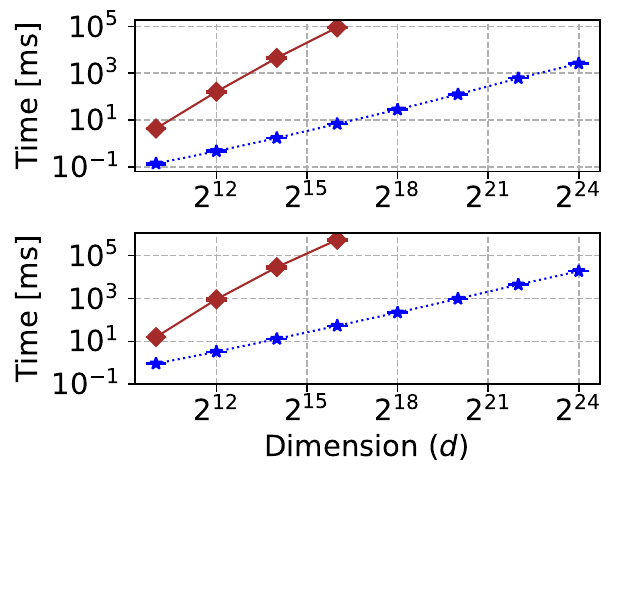}%
        
    }
    \hspace*{-1mm}
    \vspace*{-1mm}
    \subfigure[$d=2^{12}$. \label{fig:truncnorm_compare_exact_vs_s1}]{%
    
        \includegraphics[trim={0.243822cm 2.5642091cm 0.245246851cm 0},clip,width=0.3253\linewidth]{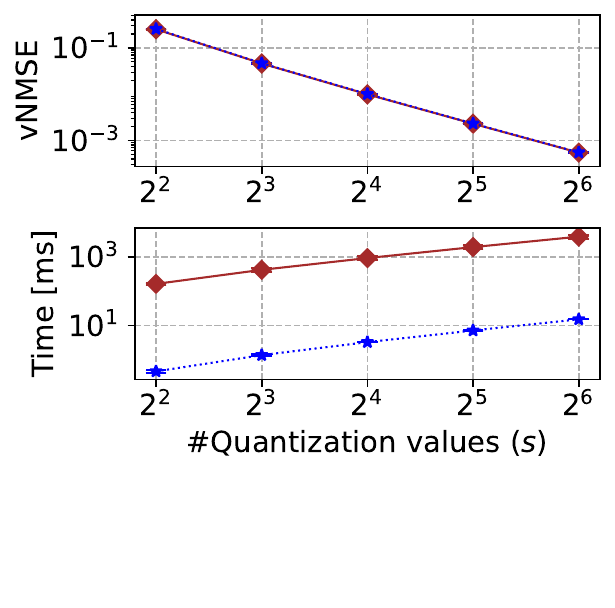}%
        
    }
    \hspace*{-1mm}
    \vspace*{-1mm}
    \subfigure[ $d=2^{16}$. \label{fig:truncnorm_compare_exact_vs_s2}]{%
    
        \includegraphics[trim={0.27654322cm 2.5642091cm 0.245246851cm 0.24cm},clip,width=0.3253\linewidth]{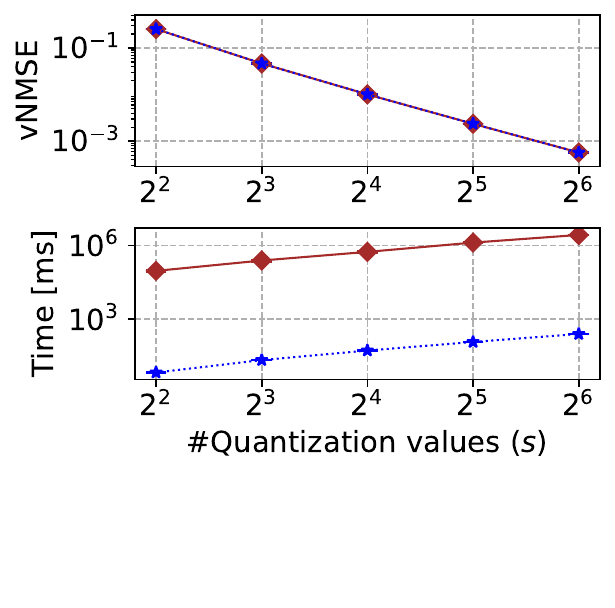}%
        
    }
    \includegraphics[width=0.4\linewidth]{figures_new/exact_legend.pdf}
    \vspace{-4mm}
    \caption{Exact solutions with TruncNorm($\mu=0, \sigma^2=1, a=-1, b=1$) distributed input.}
    \label{fig:truncnorm_compare_exact_main}
\end{figure}

\begin{figure}[]
    \centering
    \hspace*{-2mm}
    \subfigure[$s=4$ (upper), $s=16$ (lower). \label{fig:weibull_compare_exact_vs_d}]{%
    
        \includegraphics[trim={0.22cm 2.5642091cm 0.2541cm 0},clip,width=0.3253\linewidth]%
        {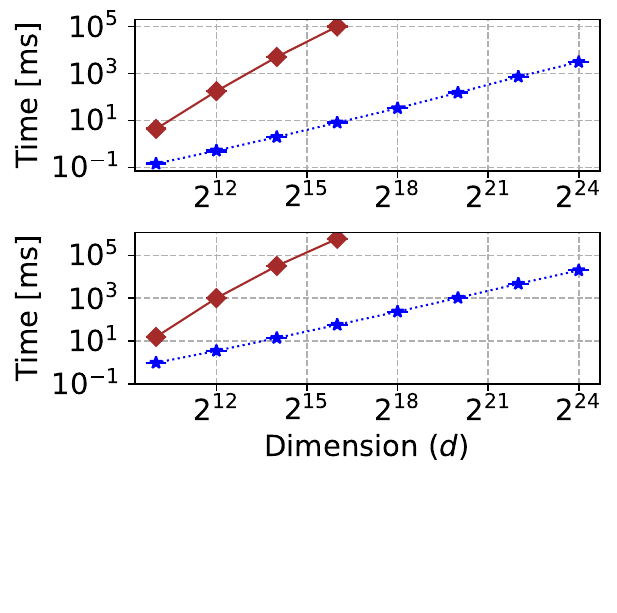}%
        
    }
    \hspace*{-1mm}
    \vspace*{-1mm}
    \subfigure[$d=2^{12}$. \label{fig:weibull_compare_exact_vs_s1}]{%
    
        \includegraphics[trim={0.243822cm 2.5642091cm 0.245246851cm 0},clip,width=0.3253\linewidth]{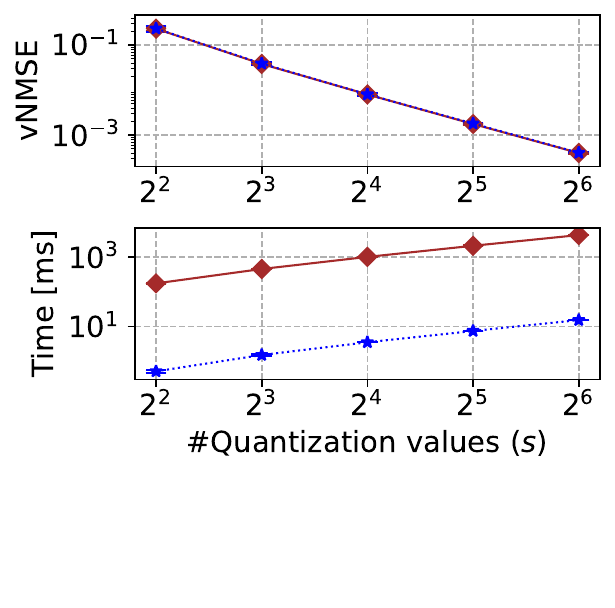}%
        
    }
    \hspace*{-1mm}
    \vspace*{-1mm}
    \subfigure[ $d=2^{16}$. \label{fig:weibull_compare_exact_vs_s2}]{%
    
        \includegraphics[trim={0.27654322cm 2.5642091cm 0.245246851cm 0.24cm},clip,width=0.3253\linewidth]{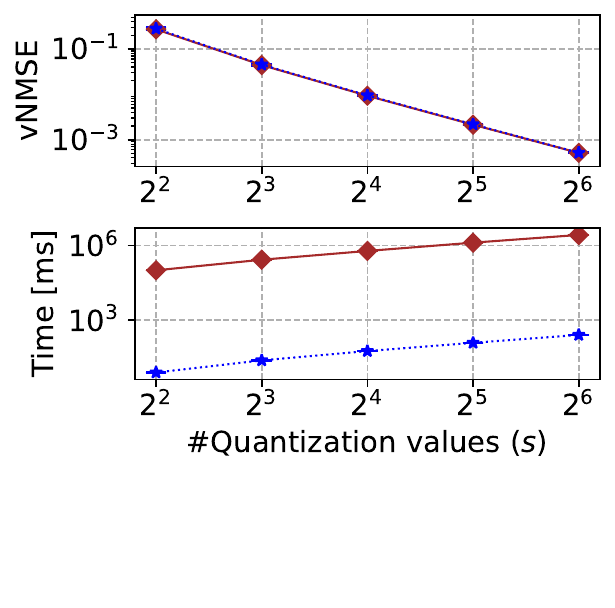}%
        
    }
    \includegraphics[width=0.4\linewidth]{figures_new/exact_legend.pdf}
    \vspace{-4mm}
    \caption{Comparing exact solutions with Weibull($1,1$) distributed input.}
    \label{fig:weibull_compare_exact_main}
\end{figure}

\begin{figure}[t]
    \centering
    \hspace*{-2mm}\vspace*{-2mm}
    \subfigure[$s=16$ and $m=400$ bins. \label{fig:normal_compare_approx_vs_d2}]{%
        \includegraphics[trim={0.22cm 2.5642091cm 0.2541cm 0},clip,width=0.3253\linewidth]%
        {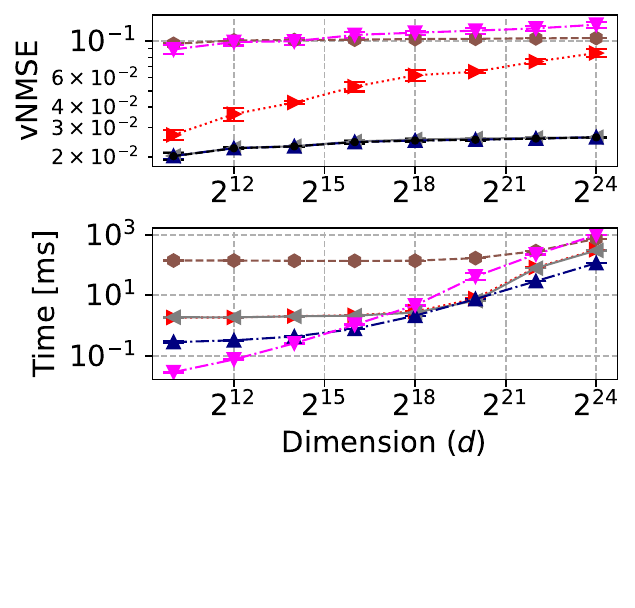}%
        
    }
    \hspace*{-1mm}
    \subfigure[$d=2^{22}$ and $m=1000$ bins. \label{fig:normal_compare_approx_vs_s}]{%
        \includegraphics[trim={0.243822cm 2.5642091cm 0.24851645246851cm 0},clip,width=0.3253\linewidth]{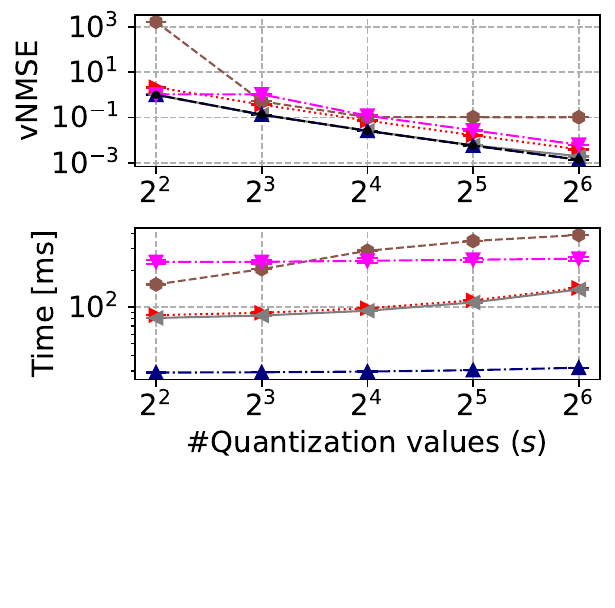}%
        \vspace{-0mm}
    }
    \hspace*{-1mm}
    \subfigure[$d=2^{22}$ and $s=32$. \label{fig:normal_compare_approx_vs_m}]{%
        \includegraphics[trim={0.269507590580193027897654322cm 2.5642091cm 0.293012346902845246851cm 0.2024cm},clip,width=0.3253\linewidth]{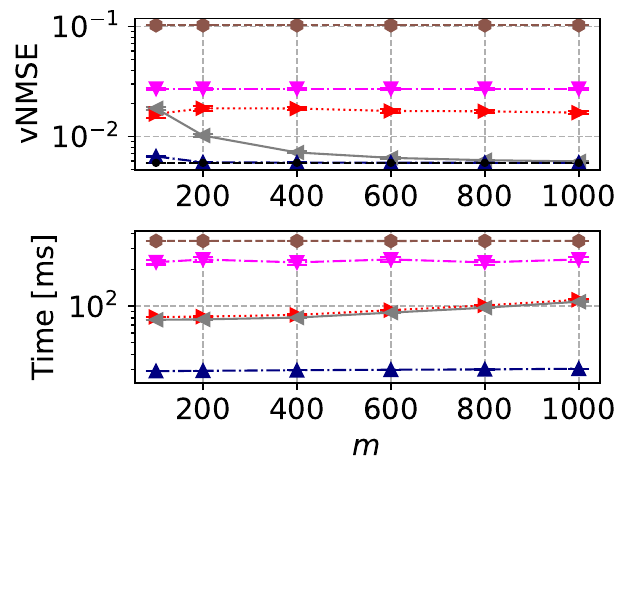}%
        \vspace{-0mm}
    }
    \includegraphics[width=0.70\linewidth]{figures_new/approx_legend.pdf}
    \vspace{-4mm}
    \caption{Comparing approximate solutions with Normal($0,1$) distributed input.}
    \label{fig:normal_compare_approx_main}
    \vspace{-0mm}
\end{figure}

\begin{figure}[t]
    \centering
    \hspace*{-2mm}\vspace*{-2mm}
    \subfigure[$s=16$ and $m=400$ bins. \label{fig:exponential_compare_approx_vs_d2}]{%
        \includegraphics[trim={0.22cm 2.5642091cm 0.2541cm 0},clip,width=0.3253\linewidth]%
        {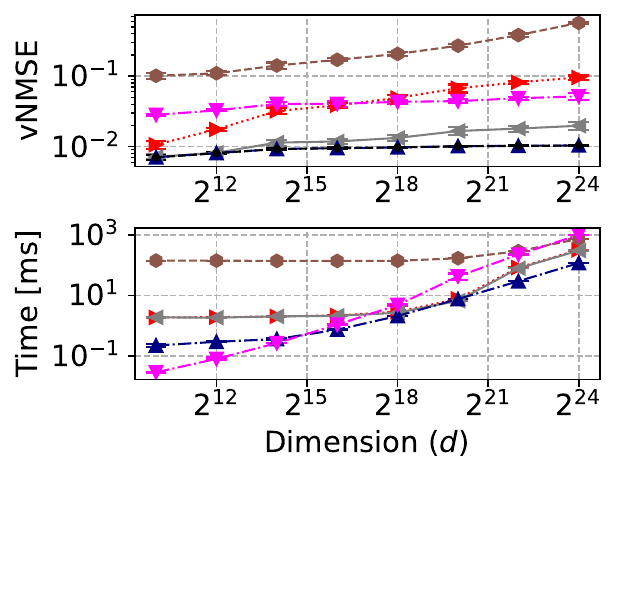}%
        
    }
    \hspace*{-1mm}
    \subfigure[$d=2^{22}$ and $m=1000$ bins. \label{fig:exponential_compare_approx_vs_s}]{%
        \includegraphics[trim={0.243822cm 2.5642091cm 0.24851645246851cm 0},clip,width=0.3253\linewidth]{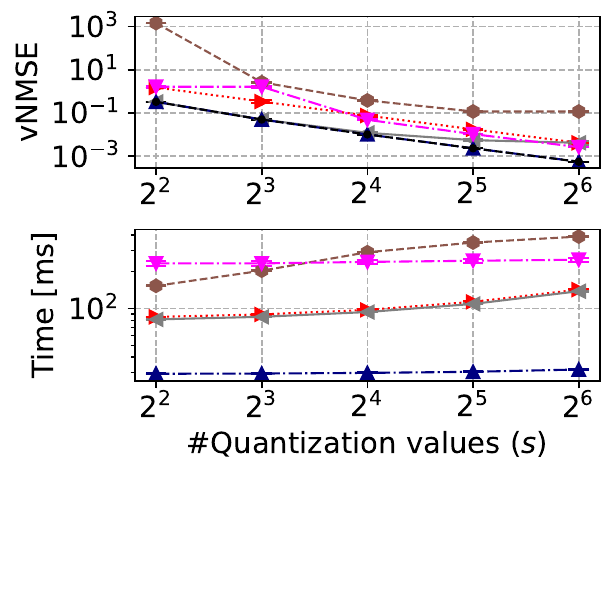}%
        \vspace{-0mm}
    }
    \hspace*{-1mm}
    \subfigure[$d=2^{22}$ and $s=32$. \label{fig:exponential_compare_approx_vs_m}]{%
        \includegraphics[trim={0.269507590580193027897654322cm 2.5642091cm 0.293012346902845246851cm 0.2024cm},clip,width=0.3253\linewidth]{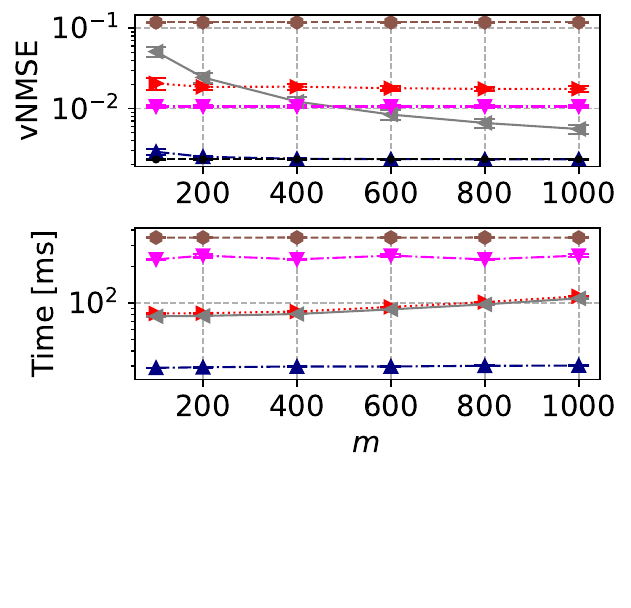}%
        \vspace{-0mm}
    }
    \includegraphics[width=0.70\linewidth]{figures_new/approx_legend.pdf}
    \caption{Comparing approximate solutions with Exponential($1$) distributed input.}
    \label{fig:exponential_compare_approx_main}
    \vspace{-0mm}
\end{figure}

\begin{figure}[t]
    \centering
    \hspace*{-2mm}\vspace*{-2mm}
    \subfigure[$s=16$ and $m=400$ bins. \label{fig:truncnorm_compare_approx_vs_d2}]{%
        \includegraphics[trim={0.22cm 2.5642091cm 0.2541cm 0},clip,width=0.3253\linewidth]%
        {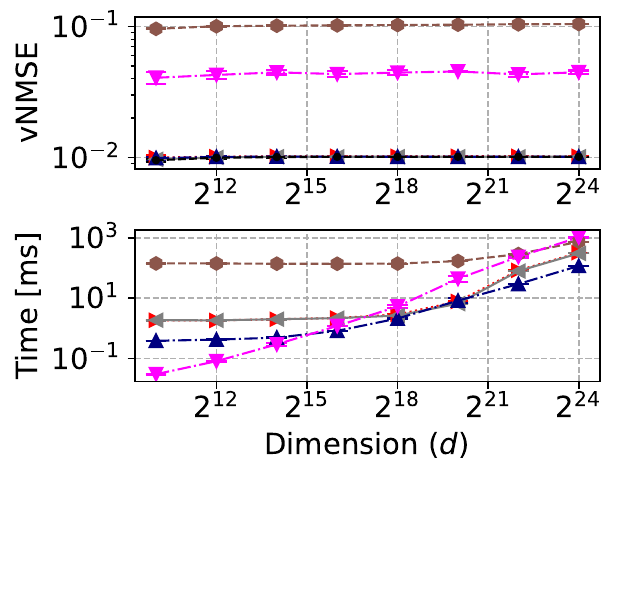}%
        
    }
    \hspace*{-1mm}
    \subfigure[$d=2^{22}$ and $m=1000$ bins. \label{fig:truncnorm_compare_approx_vs_s}]{%
        \includegraphics[trim={0.243822cm 2.5642091cm 0.24851645246851cm 0},clip,width=0.3253\linewidth]{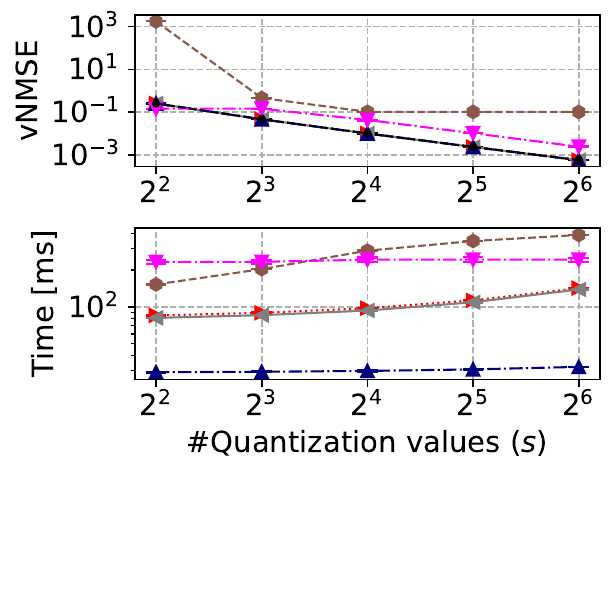}%
        \vspace{-0mm}
    }
    \hspace*{-1mm}
    \subfigure[$d=2^{22}$ and $s=32$. \label{fig:truncnorm_compare_approx_vs_m}]{%
        \includegraphics[trim={0.269507590580193027897654322cm 2.5642091cm 0.293012346902845246851cm 0.2024cm},clip,width=0.3253\linewidth]{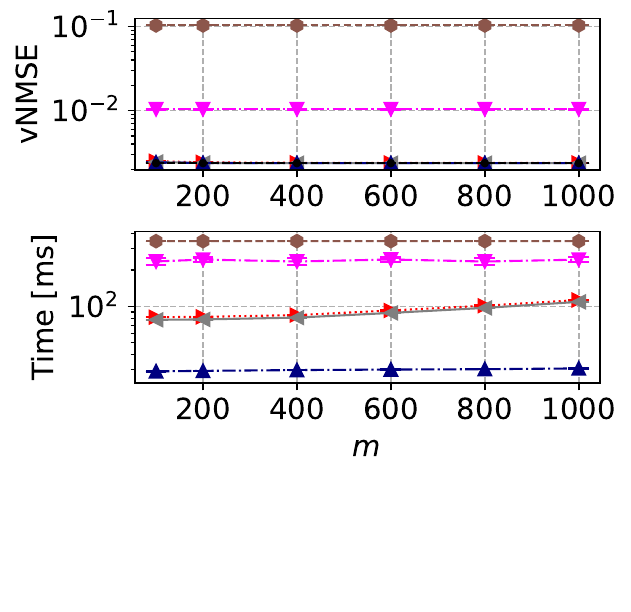}%
        \vspace{-0mm}
    }
    \includegraphics[width=0.70\linewidth]{figures_new/approx_legend.pdf}
    \vspace{-4mm}
    \caption{Approx. solutions with TruncNorm($\mu=0, \sigma^2=1, a=-1, b=1$) distributed input.}
    \label{fig:truncnorm_compare_approx_main}
    \vspace{-0mm}
\end{figure}

\begin{figure}[t]
    \centering
    \hspace*{-2mm}\vspace*{-2mm}
    \subfigure[$s=16$ and $m=400$ bins. \label{fig:weibull_compare_approx_vs_d2}]{%
        \includegraphics[trim={0.22cm 2.5642091cm 0.2541cm 0},clip,width=0.3253\linewidth]%
        {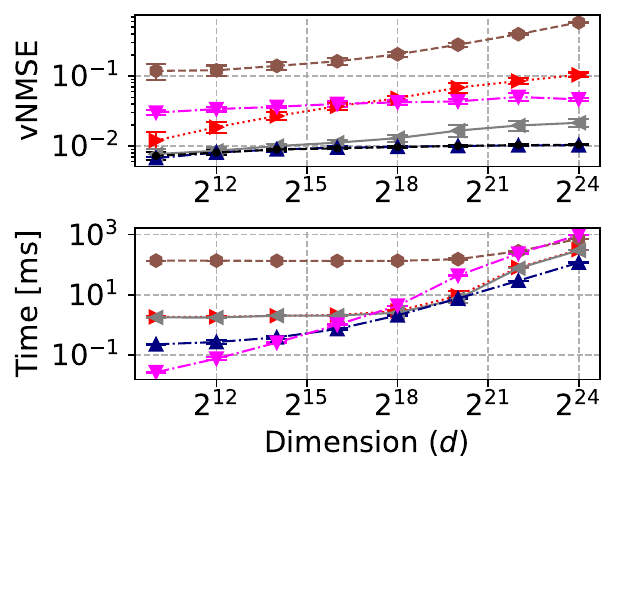}%
        
    }
    \hspace*{-1mm}
    \subfigure[$d=2^{22}$ and $m=1000$ bins. \label{fig:weibull_compare_approx_vs_s}]{%
        \includegraphics[trim={0.243822cm 2.5642091cm 0.24851645246851cm 0},clip,width=0.3253\linewidth]{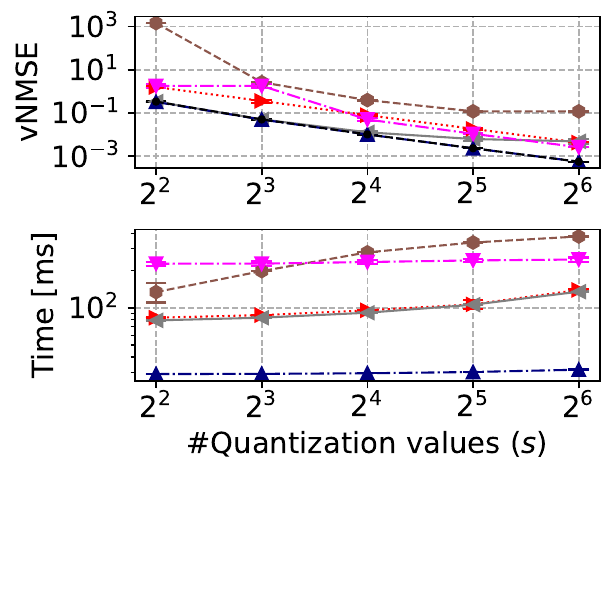}%
        \vspace{-0mm}
    }
    \hspace*{-1mm}
    \subfigure[$d=2^{22}$ and $s=32$. \label{fig:weibull_compare_approx_vs_m}]{%
        \includegraphics[trim={0.269507590580193027897654322cm 2.5642091cm 0.293012346902845246851cm 0.2024cm},clip,width=0.3253\linewidth]{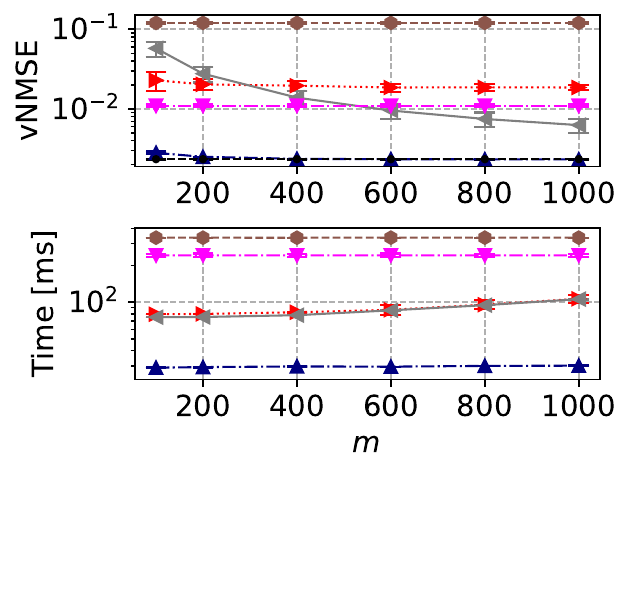}%
        \vspace{-0mm}
    }
    \includegraphics[width=0.70\linewidth]{figures_new/approx_legend.pdf}
    \vspace{-4mm}
    \caption{Comparing approximate solutions with Weibull($1,1$) distributed input.}
    \label{fig:weibull_compare_approx_main}
    \vspace{-0mm}
\end{figure}

\section{\aalg Pseudo-code}\label{app:aalgpseudocode}
We describe the pseudo-code of \aalg, which is given by~\cref{alg:approxdp}.
We start by preprocessing the input to obtain the $\alpha,\beta,\gamma$ arrays (~\cref{line:apxpreprocess}). Next, we initialize the first row of the matrix, which only has $m$ columns, using $C_m$ (\cref{line:approxss444s}).
Follows are $s-2$ invocations of the SMAWK algorithm, each yielding the next row in $\DP$ and its minimizers $K[i,\cdot]$ (\cref{line:approxcomput444eopti}).
Finally, we compute the resulting quantization value set $Q$ from $K$ and $S$ (\cref{line:apxQ}).

\section{\alg Acceleration Evaluation}\label{app:accquivereval}
Here, we evaluate by how much Accelerated \alg is faster than \alg.
The results, depicted in~\cref{fig:accquiver}, show that Accelerated \alg is up to $5.4\times$ faster for $s=3$ and is consistently faster throughout. Interestingly, the speedup is more significant in odd values of $s$. This is because the number of SMAWK invocations is $\floor{s/2}-1$ in Accelerated \alg (e.g., it does not invoke SMAWK at all for $s=3$, only once for $s=5$, etc.), compared to $s-2$ invocations in \alg.

\section{Additional evaluation results}
\label{app:additional_evaluation}

\paragraph{Additional evaluation results of exact solutions.}

We provide results for additional input vectors distributions: Normal (\Cref{fig:normal_compare_exact_main}), Exponential (\Cref{fig:exponential_compare_exact_main}), Truncated Normal (\Cref{fig:truncnorm_compare_exact_main}), and Weibull (\Cref{fig:weibull_compare_exact_main}). As shown, all follow the same trends in terms of vNMSE, while the runtime is largely independent of the input distribution.

\section{ASQ Approximation Baselines}\label{sec:app:zipml_approx}

In the ZipML paper~\cite{zhang2017zipml}, the authors propose two heuristic methods for improving the runtime. The first heuristic includes calculating the optimal solution on a subset of $X$ called \emph{candidate points} (CP); they further present an analysis that bounds the error with respect to the maximal difference between consecutive CPs and the maximal number of entries in $X$ between consecutive CPs; however, as they do not provide a way to select the CPs, we consider two natural choices: using Uniform CPs, i.e., 
$\set{x_1 + \ell\cdot \frac{x_d-x_1}{m}\mid \ell\in\set{0,\ldots,m}}$.\footnote{We note that this is different our histogram approach in two aspects: (i) we stochastically quantize $X$ into the set $S$ and (ii) we use weights to consider the number of entries in each histogram~bin.} This variant is termed `ZipML-CP Unif.' in our evaluation.
The second choice of CP is Quantiles, which uses the set 
$
\set{x_{\floor{1+\ell\cdot (d-1)/m}} \mid \ell\in\set{0,\ldots,m}}.
$ This variant is termed `ZipML-CP Quant.' in our evaluation.

The second heuristic has a bicretira MSE guarantee: using $2s$ quantization values, it ensures that the MSE is at most twice that of the optimal solution with $s$ quantization values. This variant is termed `ZipML 2-Apx' in our evaluation.

We also compare against ALQ \cite{faghri2020adaptive}, which fits the parameters of a truncated normal distribution to approximate the distribution of the input vector after normalizing it by its norm. 
It then uses an iterative solution to approximate the optimal quantization values of the fitted distribution up to the desired precision. As suggested by the authors, we use ten iterations, which were shown to converge to the optimal quantization values for the resulting (truncated normal) distribution.

\paragraph{Additional evaluation results of approximate solutions.}

Similarly, we show the approximation algorithms evaluation results for the various distributions and $s$ values: 
Normal              (\Cref{fig:normal_compare_approx_main}),
Exponential         (\Cref{fig:exponential_compare_approx_main}),
Truncated Normal    (\Cref{fig:truncnorm_compare_approx_main}),
and Weibull         (\Cref{fig:weibull_compare_approx_main}).
Again, the runtime of all algorithms is weakly affected by the input distribution. \aalg is always the most accurate for increasing $d$ values and has a near-optimal vNMSE when using a sufficient value for $m$ (e.g., $m \ge 400$) while being markedly faster than all alternatives.




\vspace{3mm}
\section{Additional Overheads}\label{app:quantsort}
\vspace{3mm}
We measure the sort and quantize operations using the same EC2 server that is also equipped with an NVIDIA T4 GPU, PyTorch v2.1.2, and CUDA tool kit v12.3.
As shown in~\Cref{fig:overheads}, both operations are fast even for large vectors, despite the usage of a somewhat weak GPU. This specific measurement was done over the LogNormal(0,1) distribution, but the sorting and quantization times are largely independent of the specific distribution and were similar to other tested distributions as well.

\begin{figure}[t]
  \includegraphics[trim={0 0.3cm 0 0},clip, width=1.02\columnwidth]{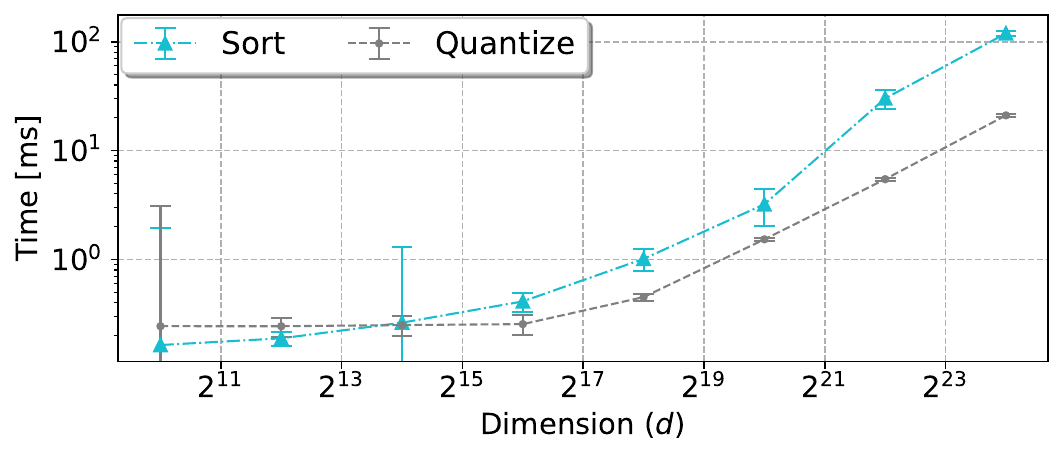}
  \vspace{3mm}
  \caption{Sort and quantization times ($s=16$) vs. $d$ on a T4 GPU.}
  \centering
  \label{fig:overheads}
\end{figure}


\vspace{3mm}
\section{Generalizing Our Algorithms to Weighted~Inputs}\label{app:weighted}
\vspace{3mm}
We generalize our algorithms for processing sorted weighted inputs
$X,W\in\mathbb R^d$ (where each entry has value $y_\ell$ and weight $w_\ell$ and $x_1\le x_2\le\ldots,x_d$).\footnote{Similarly to the unweighted case, the sorted vector requirement is only needed for the exact solutions.}

Most of the algorithmic parts only require a revised method for computing $C$ in constant time, which is achieved through the modified pre-processing procedure below. 

For simplicity, we only discuss the basic \alg variant and leave the acceleration as future work.

\paragraph{Pre-processing.}
To allow constant time computation of weighted $C$, denoted $C_w$, for weighted inputs we need another auxiliary array. Namely, we define the following:
\vspace{8mm}
\begin{align*}
&\alpha_j = {\sum_{(x,w)\in X_j} w}\qquad\ \ \ , \quad j\in\set{1,\ldots,d}\ ,  \\  
&\beta_j = {\sum_{(x,w)\in X_j} w\cdot x}\quad\ \ , \quad j\in\set{1,\ldots,d}\ ,\\
&\gamma_j = {\sum_{(x,w)\in X_j} w\cdot x^2}\quad, \quad j\in\set{1,\ldots,d}\ .
\end{align*}
Then, we can then write:

{
\begin{align*}
    C_w[k,j] &= \sum_{x_\ell\in[x_k,x_j]} w\cdot (x_j-x_\ell)(x_\ell-x_k) \\
    &= \sum_{x_\ell\in(x_k,x_j]} w\cdot (x_j-x_\ell)(x_\ell-x_k) \\
    &= x_j\cdot x_k\cdot\hspace*{-1mm}\sum_{x_\ell\in(x_k,x_j]} w_\ell +  (x_j-x_k)\cdot\hspace*{-1mm}\sum_{x_\ell\in(x_k,x_j]} w_\ell\cdot x_\ell -\sum_{x_\ell\in(x_k,x_j]} w_\ell\cdot x_\ell^2\\
    &=  x_j\cdot x_k\cdot (\alpha_j-\alpha_k) + (x_j-x_k)\cdot (\beta_j - \beta_{k})
    - (\gamma_j - \gamma_{k}).
\end{align*}
}

Observe that $C_w$ clearly satisfies the quadrangle inequality, and thus, the correctness follows.
The approximation variant also follows similarly.

\ifalse
\paragraph{Optimal quantization value.}
We repeat the analysis for the weighted case.
Let us define the sum of weighted variances in the range $[y_k,y_j]$ as a function of $q$, given quantization values at $\set{y_k,q,y_j}$:
\begin{align*}
    Q(q) &= \sum_{y_\ell\in[y_k,q]} w_\ell\cdot (q-y_\ell)(y_\ell-y_k) + \sum_{y_\ell\in(q,y_j]} w_\ell\cdot(y_j-y_\ell)(y_\ell-q).
\end{align*}

This function is differentiable in $[y_k,y_j]\setminus Y$, and we get:
\begin{align*}
    \frac{dQ(q)}{dq} = \sum_{y_\ell\in[y_k,q]} w_\ell\cdot (y_\ell-y_k) - \sum_{y_\ell\in(q,y_j]} w_\ell\cdot(y_j-y_\ell).
\end{align*}

Similarly, $Q(q)$ is minimized at $u = \inf_q (\frac{dQ(q)}{dq} > 0)$, where $u\in Y$. 
Denote by $b^*_{k,j}$, where $b^*_{k,j}\in\set{k,\ldots, j}$ the value such that $y_{b^*_{k,j}}=u$. 
Therefore, we require
$$
\sum_{\ell=k+1}^{b^*_{k,j}}  w_\ell\cdot (y_\ell-y_k) - \sum_{\ell={b^*_{k,j}}+1}^j w_\ell\cdot(y_j-y_\ell) > 0.
$$

With some simplifications, this is equivalent to:

\begin{align*}
&\sum_{\ell=k+1}^j w_\ell\cdot y_\ell - y_k\sum_{\ell=k+1}^{b^*_{k,j}} w_\ell - y_j\sum_{\ell={b^*_{k,j}}+1}^j w_\ell > 0\\    
&(\beta_j-\beta_k) - y_k(\alpha_{b^*_{k,j}}-\alpha_k) - y_j(\alpha_j-\alpha_{b^*_{k,j}}) > 0\\
&\alpha_{b^*_{k,j}} > \frac{y_j\alpha_j - y_k\alpha_k - (\beta_j - \beta_k)}{y_j - y_k}.
\end{align*}

For our histogram use-case (i.e., $n=m+1$), all the weights are integers and they sum up to $d$.
We thus find that: 
$$
\alpha_{b^*_{k,j}} = \ceil{\frac{y_j\alpha_j - y_k\alpha_k - (\beta_j - \beta_k)}{y_j - y_k}},
$$
which gives 
$$
{b^*_{k,j}} = \alpha^{-1}\parentheses{\ceil{\frac{y_j\alpha_j - y_k\alpha_k - (\beta_j - \beta_k)}{y_j - y_k}}}.
$$

We can therefore compute ${b^*_{k,j}}$ in constant time by storing the inverse mapping $\alpha^{-1}:\set{1,\ldots,d}\to\set{1,\ldots,m+1}$ explicitly using $O(d)$ space.

\fi


\clearpage

\end{document}